\crefname{equation}{Eq.}{Eqs.}
\definecolor{bleu}{RGB}{ 49,140,231}
\newcommand{\N}{\mathbb{N}}
\newcommand{\R}{\mathbb{R}}
\newcommand{\cS}{\mathcal{S}}
\newcommand{\Hom}{\mathsf{Hom}}
\newcommand{\D}{\partial}
\newcommand{\ep}{\varepsilon}
\newtheorem*{rep@theorem}{\rep@title}
\newcommand{\newreptheorem}[2]{\newenvironment{rep#1}[1]{\def\rep@title{#2 \ref{##1}}\begin{rep@theorem}}{\end{rep@theorem}}}
\newtheorem{theorem}{Theorem}
\newtheorem{corollary}[theorem]{Corollary}
\theoremstyle{definition}
\newtheorem{definition}[theorem]{Definition}
\newtheorem{remark}[theorem]{Remark}
\newtheorem{lemma}[theorem]{Lemma}
\newtheorem{proposition}[theorem]{Proposition}
\newcommand\restr[2]{{\left.\kern-\nulldelimiterspace #1 \vphantom{\big|} \right|_{#2} }}
\title{Simplicial Representation Learning with\\Neural $k$-Forms} 
\date{}
\author{Kelly Maggs$^{1}$, Celia Hacker$^{2}$, Bastian Rieck$^{3,4}$\\[+0.5cm]
$^1$École Polytechnique Fédérale de Lausanne (EPFL)\\
  $^2$Max Planck Institute for Mathematics in the Sciences\\
  $^3$AIDOS Lab, Institute of AI for Health, Helmholtz Munich\\
  $^4$Technical University of Munich~(TUM)
}
\begin{document}

\maketitle

\begin{abstract}
\emph{Geometric deep learning} extends deep learning to incorporate
information about the geometry and topology data, especially in complex
domains like graphs. Despite the popularity of message passing in this
field, it has limitations such as the need for graph rewiring, ambiguity
in interpreting data, and over-smoothing. In this paper, we take
a different approach, focusing on leveraging geometric information from
simplicial complexes embedded in $\R^n$ using node coordinates. We use
differential $k$-forms in $\R^n$ to create representations of simplices,
offering interpretability and geometric consistency without message
passing. This approach also enables us to apply differential geometry
tools and achieve universal approximation. Our method is efficient,
versatile, and applicable to various input complexes, including graphs,
simplicial complexes, and cell complexes. It outperforms existing
message passing neural networks in harnessing information from
geometrical graphs with node features serving as coordinates.
\end{abstract}

\section{Introduction}

\emph{Geometric deep learning}~\citep{Bronstein17a} expanded the scope
of deep learning methods to include information about the geometry--and,
to a lesser extent, topology---of data, thus enabling their use in more
complicated and richer domains like graphs. While the recent years have
seen the development of a plethora of methods, the predominant paradigm
of the field remains \emph{message passing}~\citep{Velickovic23a}, which
was recently extended to handle higher-order domains, including
\emph{simplicial complexes}~\citep{ebli2020simplicial}, \emph{cell
complexes}~\citep{Hajij2020Cell}, and
\emph{hypergraphs}~\citep{Heydari22a}.
However, despite its utility, the message passing paradigm suffers from
inherent limitations like over-smoothing, over-squashing, and an
inability to capture long-range dependencies.
These limitations often require strategies like graph rewiring, which
change the underlying graph structure~\citep{Gasteiger19a, Topping22a}
and thus affect generalisation performance.

Our paper pursues a completely different path and strives to leverage additional geometric
information from a data set to obtain robust and interpretable representations of the input data. Specifically, we consider input data in the form of simplicial complexes
embedded in $\R^n$ via node coordinates. This type of complex can be built from any graph with node features, with node features acting as the coordinates, for example. 
Our key insight is the use of \textit{differential $k$-forms} in $\R^n$.
A $k$-form in $\R^n$ can be integrated over any $k$-simplex embedded in
$\R^n$ to produce a real number.
Thus an $\ell$-tuple of $k$-forms produces an $\ell$-dimensional
\textit{representation} of the simplex independently of any message
passing.
From this perspective, $k$-forms play the role of globally consistent feature maps over the space of embedded $k$-simplices, possessing the geometric semantics and interpretability of integration.
This enables us to use tools from differential geometry to prove a version of universal approximation, as well as a number of other theoretical results.
Moreover, the structure of differential forms in $\R^n$ makes learning algorithms~(computationally) feasible.
In particular, a multi-layer perceptron with the right dimensions induces a $k$-form on $\R^n$ that can be integrated.
This implies that \textit{learnable}, finitely-parametrised differential forms can be woven into existing machine learning libraries and applied to common tasks in geometric deep learning.
We find that our method is better capable of harnessing information from
geometrical graphs than existing message passing neural networks.
Next to being \emph{efficient}, our method is also
\emph{generally applicable} to a wide variety of input complexes,
including graphs, simplicial complexes, and cell complexes.\footnote{For example, by taking barycentric subdivision, integration of forms over cell complexes is recoverable by integration over simplicial complexes.} 

\begin{tcolorbox}[
  boxsep     = 0.0mm,
  colframe   = bleu,
  colback    = bleu!10,
  left       = 2.0mm,
  right      = 2.0mm,
sharp corners,
]
  \textbf{In a nutshell:} We consider \textsc{data} in the form of
  simplicial chains on \emph{embedded simplicial complexes}, defining
  learnable \emph{differential $k$-forms} as \textsc{feature maps}, and
  introduce the concept of an \emph{integration matrix}, which serves as
  an overall \textsc{representation} of the data.
\end{tcolorbox}

\paragraph{Organisation of Paper.}
We present the relevant background for simplicial complexes and
differential forms in \Cref{sec:background}. In \Cref{sec:neural_forms},
we introduce neural $k$-forms and prove a universal
approximation statement. We also show how neural $k$-forms induce
a so-called integration matrix, and use the properties of integration to
prove a number of propositions. In \Cref{sec:architecture} we present the basic architecture and algorithms. Finally, in \Cref{sec:experiments}, we
perform several intuitive experiments and benchmark our method on
standard geometrical deep learning data sets.

\paragraph{Related Work.}
Several methods focus on generalising
graph neural networks~(GNNs) to higher-dimensional domains, proposing 
\emph{simplicial neural networks}~\citep{ebli2020simplicial,
Bodnar2021WeisfeilerAL, Bunch2020Simplicial, Roddenberry2021principled,
Keros2021dist,Giusti2022simplicial,Goh2022simplicial},
methods that can leverage higher-order topological features of data~\citep{hajij2023topological, hensel2021survey, horn2022topological}, or optimisation algorithms for learning representations of simplicial complexes~\citep{hacker2021ksimplex2vec}.
All of these methods operate on simplicial complexes via \emph{diffusion
processes} or \emph{message passing} algorithms.
Some works also extend message
passing or aggregation schemes to \emph{cell complexes}~\citep{Bodnar2021CW,Hajij2020Cell, hajij2023topological} or \emph{cellular sheaves}~\citep{han19sheaf, Barbero2022Sheaf}. 
However, these existing methods exhibit limitations arising from the use
of message passing or aggregation along a combinatorial structure.
Message passing often results in \emph{over-smoothing}~(a regression to the
mean for all node features, making them indistinguishable) or
\emph{over-squashing}~(an inability to transport long-range node
information throughout the graph), necessitating additional
interventions~\citep{Gasteiger19a, Nguyen23a, Topping22a}.
Hence, there is a need for methods that go beyond message passing.
Our work provides a novel perspective based on the integration of
learnable $k$-forms on combinatorial domains like graphs or simplicial
complexes embedded in $\R^n$, i.e.\ we assume the existence
of~(vertex) coordinates.

\section{Background}\label{sec:background}

This section introduces the required background of simplicial complexes
and differential forms. We restrict our focus to simplices and
differential forms in $\R^n$, given this is the only setting we will use
in practice to make the theory more accessible. For additional
background references, we recommend \citet{Nanda21} for computational
topology and \citet{Lee03} or \citet{Tao_2009} for differential forms. More details are also provided in Appendix \ref{diff_form_appendix}. 

\paragraph{Abstract Simplicial Complexes.}
An abstract simplicial complex $\mathcal{S}$ is generalisation of
a graph on a set of vertices $\mathcal{S}^0 $ . In a graph, we have
pairwise connections between nodes given by edges, or $1$-dimensional
simplices~(denoted by $\mathcal{S}^1$). In simplicial complexes, there
are higher-dimensional counterparts of these connections, called the
$k$-dimensional simplices, or just $k$-simplices. The $k$-simplices are
connections between $k+1$ vertices of the set $\mathcal{S}^0$, with
a $2$-simplex forming a triangle, a $3$-simplex forming a tetrahedron,
and so on. We denote a $k$-simplex by $\sigma = [v_0, \dots, v_k ]$,
where $v_i\in \mathcal{S}^0$, writing $\mathcal{S}^k$ to refer to the
set of all $k$-simplices. If a simplex $\sigma \in \mathcal{S}$, then so
are all of its faces, i.e.\ all simplices formed by subsets of
the vertices of $\sigma$.

\paragraph{Affine Embeddings.} Data in geometric deep learning most
often comes as a geometric object---a graph or simplicial
complex---combined with node features in $\R^n$. Formally, node features
correspond to a \emph{node embedding} of a simplicial complex $\cS$,
which is a map $\phi\colon \cS^0 \to \R^n$. The \emph{standard geometric
$k$-simplex} is the convex hull $\Delta_k = [0,t_1, \cdots, t_k] \subset
\R^k$, where $t_i$ is the endpoint of the $i$-th basis vector. For each
$k$-simplex $\sigma = [v_0, \ldots, v_k]$ the map $\phi$ induces an
\emph{affine embedding} $\phi_\sigma\colon \Delta^k \to \R^n$ whose image
is the convex hull $[\phi(v_0), \ldots, \phi(v_k)]$.

\paragraph{Chains and Cochains over $\R$.}
For an oriented\footnote{The choice of orientation of each simplex corresponds to a choice of
  sign for each basis vector.
}
simplicial complex $\mathcal{S}$, the \emph{simplicial $k$-chains}
$C_k(\mathcal{S};\R)$ are the vector space
\begin{equation}
    C_k(\mathcal{S};\R) = \Bigg\{ \sum_i \lambda_i \sigma_i \mid \sigma_i \in \cS^k, \lambda_i \in \R \Bigg\}
\end{equation}
of formal linear combinations of $k$-simplices in $\cS$. The
\emph{simplicial $k$-cochains} $C^k(\mathcal{S};\R)$ over $\R$ are the
dual space $\Hom(C_k(\mathcal{S}); \R)$ of linear functionals over the
simplicial $k$-chains.

\paragraph{Differential Forms.} The \emph{tangent space }$T_p(\R^n)$ at $p$ is the space of all vectors originating at a point $p \in
\R^n$, and its
elements are called \emph{tangent vectors}.
In our case, this is space is canonically isomorphic to the underlying space $\R^n$.
A differential form is a function that assigns a notion of \emph{volume} to
tuples of tangent vectors at each point in $\R^n$. Given a tuple of $k$ standard basis vectors $(e_{i_1},\cdots, e_{i_k})$
in $\R^n$ indexed by $I = (i_1, i_2, \ldots, i_k)$ and a vector $v \in
T_p(\R^n)$, the vector $v^I$ is the projection of $v$ onto the
$I$-subspace spanned by $(e_{i_1}, e_{i_2}, \dotsc, e_{i_k})$. The
associated \emph{monomial $k$-form} 
\begin{equation}
  dx_I(v_1,  v_2, \ldots, v_k ) = \varepsilon^I(v_1,  v_2, \ldots, v_k ) := \det \big[ v^I_1,  v^I_2, \ldots, v^I_k \big]
\end{equation}
represents the standard volume spanned by $k$ tangent vectors $v_i \in
T_p(\R^n)$ in the $I$-subspace of the tangent space. 

\paragraph{Scaling Functions.} General differential forms are built from
locally linear combinations of re-scaled monomial $k$-forms. Formally,
a general differential $k$-form $\omega \in \Omega^k(\R^n)$ can be
written as
\begin{equation}
  \label{form_decomp}
  \omega_p(v_1,v_2, \ldots, v_k) = \sum_I \alpha_I(p) dx_I(v_1,v_2, \ldots, v_k),
\end{equation}
where $p \in \R^n$ and $v_i \in T_p(\R^n)$ and $I$ ranges over the
$\binom{n}{k}$ subspaces spanned by sets of $k$ basis vectors in $\R^n$.
The \emph{scaling functions} $\alpha_I\colon \R^n \to \R$ represent
a re-scaling of the standard volume in the $I$-subspace for each point
$p \in \R^n$.
Intuitively, the scaling functions in a differential $k$-form specify
the size and orientation of a subspace for each point. 

\paragraph{Integration.}
Differential $k$-forms can be integrated over embedded $k$-simplices
$\phi\colon\Delta^k \to \R^n$. Let $D\phi(t)$ be the Jacobian matrix of
$\phi$ at $t \in \Delta^k$.
For affinely-embedded simplices,  $D\phi(t)$ is given by
\begin{equation} 
  D\phi_{i,j} = \Big[ \phi^i(t_j) - \phi^i(0) \Big]_{i,j}.
\end{equation}
The integral of $\omega \in \Omega^k(\R^n)$ over the image of $\phi$ can be expressed as
\begin{equation}
  \label{integration_eq}
  \int_\phi \omega := \sum_I \int_{\Delta^k} \alpha_I(\phi(t)) \varepsilon^I\big(D\phi \big) dt,
\end{equation}
where the integral is interpreted as a standard Riemann integral over
the~(compact) subset $\Delta^k \subseteq \R^k$. This represents the
signed volume of the image of $\phi$ with respect to the differential
form $\omega$. In practice, this integral is approximated with a finite
Riemann sum. We provide more background on integration of forms in
\Cref{diff_form_appendix}. 
The integral in \Cref{integration_eq} is well-defined if
$\phi$ is a $C^1$ embedding function and $\alpha_I \circ \phi$ is
integrable. 

\section{Neural \texorpdfstring{$k$}{k}-Forms and Integration Matrices} \label{sec:neural_forms}

Broadly speaking, representation learning is the process of turning data
into vectors in $\R^\ell$, followed by the use of standard tools of
machine learning to classify/predict attributes based on these vector
representations. In graph learning and simplicial learning more
generally, data comes in the form of simplicial complexes embedded in
$\R^n$. 
One simple vectorization is to take the standard \textit{volume} of
a $k$-simplex embedded in $\R^n$.
\textbf{The central idea of our paper is to
create vector representations of $k$-simplices as volumes relative to
a tuple of differential $k$-forms.}
Indeed, a tuple of $k$-forms
$\omega_1, \omega_2, \dotsc, \omega_\ell \in \Omega^k(\R^n)$ determines
a representation function
\begin{equation}
    (\phi\colon\Delta^k \to \R^n) \mapsto \Big( \int_\phi \omega_1, \dotsc, \int_\phi \omega_\ell \Big) \in \R^\ell
\end{equation}
that vectorises any $k$-simplex embedded in $\R^n$ by calculating its
volume via integration of forms. 
In this paradigm, each $k$-form takes the role of a \emph{feature map}
on the space of embedded $k$-simplices.

\paragraph{Neural $k$-forms.}
The key to making integral-based representations \textit{learnable} is
to model the scaling functions in \cref{form_decomp} as a multi-layer
perceptron~(MLP).
For a set of $k$ basis vectors $I$, let $\pi_I \colon \R^{\binom{n}{k}}
\to \R$ be the \emph{projection onto the $I$-subspace} and define
$\psi_I := \pi_I \circ \psi \colon \R^{n} \to \R$.
\begin{definition}
  \label{def:neural_kform}
  Let $\psi\colon\R^n \to \R^{\binom{n}{k}}$ be an MLP.
The \emph{neural $k$-form} $\omega^\psi \in \Omega^k(\R^n)$
  associated to $\psi$ is the $k$-form $\omega^\psi = \sum_I \psi_I dx_I$.
\end{definition}
In words, the components of the MLP correspond to the scaling functions
of the neural $k$-form. \textbf{The goal of a neural $k$-form is thus to learn
the size and orientation of $k$-dimensional subspaces at each point in
$\R^n$ according to some downstream learning task.}
In case the activation function is a sigmoid function or $\tanh$,
\cref{def:neural_kform} produces smooth $k$-forms, whereas for
a \texttt{ReLU} activation function, one obtains piecewise linear
$k$-forms.

\begin{remark}
A neural $0$-form over $\R^n$ with $\ell$ features is an MLP $\psi$ from
$\R^n$ to $\R^\ell$. The $0$-simplices $p \in \R^n$ are points in
$\R^n$, and integration of a $0$-form corresponds to the evaluation
$\psi(p)$. In this way, neural $k$-forms are a direct extension of MLPs
to higher-dimensional simplices in $\R^n$.
\end{remark}

\paragraph{Universal Approximation.}
We would like to know which $k$-forms on $\R^n$ are possible to
approximate with neural $k$-forms. The following proposition translates
the well-known Universal Approximation Theorem~\citep{Cybenko_1989,
Hornik_Stinchcombe_White_1989} for neural networks into the language of
neural $k$-forms. Here, the norm $\lVert \, \, \,
\rVert_{\Omega_c(\R^n)}$ on $k$-forms is induced by the standard
Riemannian structure on $\R^n$, as explained in \Cref{proof-appendix}.

\begin{theorem}\label{prop:univ-approx-1}
    Let $\alpha \in C(\R, \R)$ be a non-polynomial activation function.
    For every $n \in \N$ and compactly supported $k$-form $\eta \in
    \Omega^k_c(\R^n)$ and $\epsilon > 0$ there exists a neural $k$-form
    $\omega^\psi$ with one hidden layer such that $ \lVert \omega^\psi
    - \eta \rVert_{\Omega_c(\R^n)} < \epsilon$.
\end{theorem}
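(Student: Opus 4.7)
The plan is to reduce the statement to the classical Universal Approximation Theorem applied componentwise to the scaling functions of $\eta$. By \cref{form_decomp}, we may write $\eta = \sum_I \alpha_I \, dx_I$ with scaling functions $\alpha_I \in C_c(\R^n, \R)$, where $I$ ranges over the $\binom{n}{k}$ strictly increasing $k$-index tuples. Fix a compact set $K \subset \R^n$ whose interior contains $\mathrm{supp}(\eta)$. The goal is to build a single one-hidden-layer MLP $\psi : \R^n \to \R^{\binom{n}{k}}$ whose component maps $\psi_I = \pi_I \circ \psi$ uniformly approximate each $\alpha_I$ on $K$; the bound on the form norm will then follow from a pointwise estimate in the orthonormal frame $\{ dx_I \}_I$.

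First, I would invoke the classical Universal Approximation Theorem of \citet{Cybenko_1989} and \citet{Hornik_Stinchcombe_White_1989}: because each scaling function $\alpha_I$ is continuous on the compact set $K$ and the activation $\alpha$ is non-polynomial, for any $\delta > 0$ there exists a one-hidden-layer MLP $\psi_I : \R^n \to \R$ with activation $\alpha$ satisfying $\sup_{p \in K} \lvert \psi_I(p) - \alpha_I(p) \rvert < \delta$.

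Second, I would assemble the $\binom{n}{k}$ scalar networks into a single vector-valued one-hidden-layer network by taking the hidden layer of $\psi$ to be the direct sum (concatenation) of the hidden layers of the individual $\psi_I$ and arranging the output matrix in block-diagonal form, so that $\pi_I \circ \psi = \psi_I$ holds by construction. By \cref{def:neural_kform} the associated neural $k$-form is $\omega^\psi = \sum_I \psi_I \, dx_I$, so
\begin{equation*}
\omega^\psi - \eta \;=\; \sum_I (\psi_I - \alpha_I)\, dx_I.
\end{equation*}
Since $\{ dx_I \}_I$ is a pointwise orthonormal frame for $\Lambda^k(T^*_p \R^n)$ with respect to the standard Riemannian structure on $\R^n$, we obtain a pointwise form-norm bound $\lvert \omega^\psi_p - \eta_p \rvert \le C_{n,k}\, \max_I \lvert \psi_I(p) - \alpha_I(p) \rvert$. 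Integrating or taking a supremum over $K$ and choosing $\delta$ small enough (depending on $\epsilon$, $C_{n,k}$, and, if the norm is of $L^p$ type, the measure of $K$) gives $\lVert \omega^\psi - \eta \rVert_{\Omega_c(\R^n)} < \epsilon$.

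The main obstacle I anticipate is controlling $\omega^\psi$ away from $\mathrm{supp}(\eta)$: the MLPs $\psi_I$ generally do not vanish outside $K$, so the difference $\omega^\psi - \eta$ need not be compactly supported and its global form norm could accumulate uncontrolled tails. Depending on the precise definition of $\lVert \cdot \rVert_{\Omega_c(\R^n)}$ deferred to \Cref{proof-appendix}, I would resolve this either by noting that the norm only depends on values over a fixed compact set containing $K$, or by enlarging $K$ to a ball $B$ on which $\alpha_I$ vanishes outside $\mathrm{supp}(\eta)$, so that the bound $\lvert \psi_I - \alpha_I \rvert < \delta$ additionally forces $\lvert \psi_I \rvert < \delta$ throughout $B \setminus \mathrm{supp}(\eta)$, and handling the residual exterior tails via a smooth cut-off that can be absorbed into the network with only modest overhead in the width of the hidden layer.
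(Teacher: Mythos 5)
Your proposal follows essentially the same route as the paper: decompose $\eta$ into its scaling functions, use the universal approximation theorem to produce a one-hidden-layer MLP $\psi$ approximating $\bigoplus_I \eta_I$ uniformly on a compact set containing the support, and convert that uniform bound into a bound on the form norm via pointwise orthonormality of the monomial frame $\{dx_I\}$ followed by integration (the paper invokes a vector-valued version of the theorem from Pinkus rather than concatenating scalar networks, which is a cosmetic difference). The tail obstacle you flag is genuine and is in fact glossed over by the paper's own proof, which bounds $\lVert \omega^\psi - \eta\rVert^2_\Omega$ by the integral over $D$ alone even though the MLP $\psi$ need not vanish outside $D$; your proposed fixes (reading the norm as restricted to a fixed compact set, or enlarging the approximation domain so $\lvert\psi_I\rvert<\delta$ off the support) supply exactly the justification that is missing there.
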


\begin{wrapfigure}[17]{l}{0.50\linewidth}
        \centering
\includegraphics[width=\linewidth]{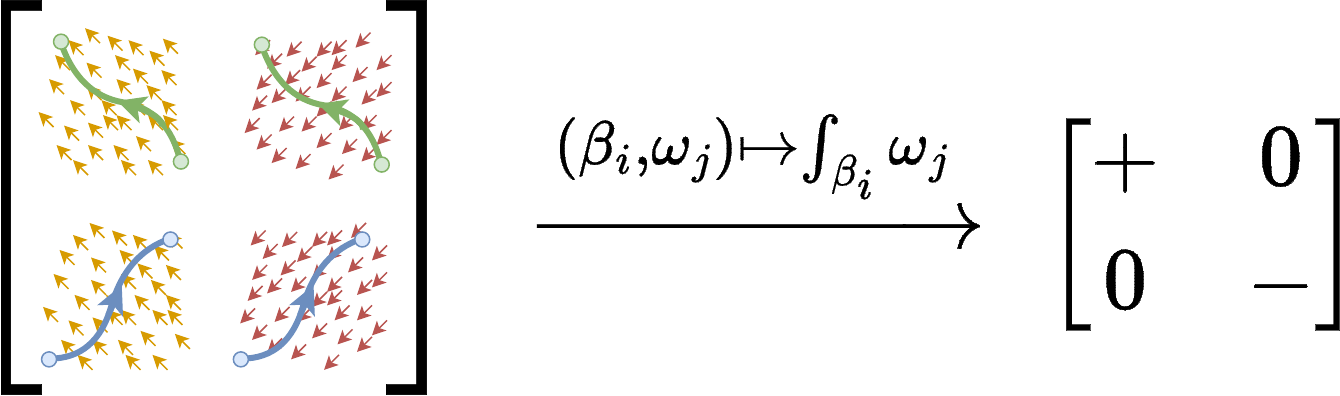}
        \caption{An \emph{integration matrix} with data in dimension~$1$,
        where embedded oriented $1$-simplices correspond to paths and $1$-forms are canonically identified with vector fields~(see \cref{diff_form_appendix} for details). Integration of a $1$-form against a path corresponds to path integration against the respective vector field. Thus, integration of  the paired paths and $1$-forms in the left matrix recovers real values with the signs given in the right matrix.
        }
    \label{fig:integration_matrix}
\end{wrapfigure}
\paragraph{Cochain Matrices.}
Most simplicial neural networks follow a similar procedure.
The key step is that the $k$-simplices of each simplicial complex $\cS$
are replaced by a matrix $X_\cS(\beta,\gamma)$ containing a selection
of $\ell$ simplicial cochains $(\gamma_1, \ldots, \gamma_\ell)$ as
columns. This can be thought of as a matrix containing evaluations
$\gamma_j(\beta_i)$ with respect to some basis $(\beta_1, \ldots,
\beta_s)$ for the simplicial $k$-chains $C_k(\cS;\R)$.
However, there is no canonical
initialisation when one starts with only the data of a set of embedded
finite simplicial complexes.\footnote{Indeed, if one takes a random initialisation, the feature cochains do
not have a shared interpretable meaning across different complexes.
}
We address this issue by introducing the \textit{integration matrix}
induced by a neural $k$-form. This produces the same data type, but has
the advantage that the feature cochains correspond to integration of the
\emph{same form} defined over the ambient space. 

\paragraph{Integration Matrices.}
For an embedded finite simplicial complex, an $\ell$-tuple $\omega
= (\omega_1, \omega_2, \ldots, \omega_\ell)$ of $k$-forms induces
a matrix suitable to simplicial learning algorithms in a natural way via
integration. Let $\phi\colon \mathcal{S} \to \R^n$ be an affine embedding of
a simplicial complex and $\beta = (\beta_1, \beta_2, \ldots, \beta_m)
\in C_k(\mathcal{S};\R)$  be a set of specified $k$-chains.
\begin{definition} 
The \emph{integration matrix} with respect to $\beta$ and $\omega$ is $X_\phi(\beta, \omega) = \big[ \int_{\beta_i} \omega_j \big]_{i,j}.$
\end{definition}
The integral of $\omega$ over a simplicial chain $\beta = \sum
\lambda_\sigma \sigma \in C_k(\cS;\R)$ with respect to the embedding
$\phi\colon \cS \to \R^n$ corresponds to the integral
\begin{equation}
\label{int-embedded-simplicial}
\int_{\beta} \omega = \sum \lambda_\sigma \int_{\phi_{\sigma}} \omega.
\end{equation} 

\begin{remark}[Interpretability]
The key point is that the integration matrices of two different
  simplicial complexes $\phi\colon \cS \to \R^n$ and $\phi'\colon\cS'
  \to \R^n$ embedded in $\R^n$ have a shared interpretation.
That is, the $j$-th column of both of their integration matrices
  corresponds to the volume of $k$-simplices against \emph{the same
  feature $k$-form} $\omega_j \in \Omega^k(\R^n)$.
\end{remark}
A tuple of $\ell$ neural networks $\psi_j \colon \R^n \to \R^{\binom{n}{k}}$
with associated neural $k$-forms $\omega^{\psi_j}$ induces
a \emph{learnable} matrix representation
\begin{equation}
  (\beta, \psi) \mapsto X_\phi(\beta, \omega^\psi) = \Big[ \int_{\beta_i} \omega^{\psi_j} \Big]
\end{equation}
of the given simplicial chain data $\beta$ with embedding $\phi\colon \cS
\to \R^n$. This intermediate representation is finitely parametrised by
$\psi$ and can thus be updated via backpropagation.
The matrix representation of a simplex---and by extension a simplicial
chain---depends on whether the neural $k$-form decides that it is
embedded in a large or small subspace, and with what orientation.

\paragraph{Basic Properties.}
There are a number of useful basic properties about integration matrices
that follow from the well-known properties of integration. In the next
proposition, we conceptualise
\begin{equation} 
\beta = (\beta_1, \ldots, \beta_m)^T \in M^{m \times 1}\Big(C_k(\cS;\R)\Big) \text{ and } \omega = (\omega_1, \ldots, \omega_\ell) \in M^{1\times \ell}\Big(\Omega^k(\R^n)\Big)
\end{equation}
as chain-valued column and $k$-form valued row vectors, respectively.
Real-valued matrices act on both vectors by scalar multiplication and
linearity.
\begin{proposition}[Multi-linearity] \label{multi-linearity}
    Let $\phi\colon\mathcal{S} \to \R^n$ be an embedded simplicial
    complex.
For any matrices $L \in M^{m' \times m}(\R)$ and $R \in M^{\ell
    \times \ell'}(\R)$, we have
\begin{equation}
        X_\phi(L \beta, \omega R) = L X_\phi(\beta, \omega) R
    \end{equation}
\end{proposition}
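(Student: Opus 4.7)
The plan is to verify the identity entrywise by unpacking both sides and invoking bilinearity of the integration pairing $(\beta, \omega) \mapsto \int_\beta \omega$ between chains and forms. Since the integration matrix is defined componentwise as $[X_\phi(\beta,\omega)]_{ij} = \int_{\beta_i} \omega_j$, the whole statement reduces to showing that integration is linear separately in the chain argument and in the form argument; once both linearities are in hand, the matrix identity drops out as the standard ``scalars pull out'' identity for matrix products.

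First, I would fix indices $i \in \{1,\ldots,m'\}$ and $j \in \{1,\ldots,\ell'\}$ and compute the $(i,j)$-entry of the left-hand side. Using the definition, $(L\beta)_i = \sum_k L_{ik}\beta_k$ in $C_k(\cS;\R)$ and $(\omega R)_j = \sum_l \omega_l R_{lj}$ in $\Omega^k(\R^n)$, so
\begin{equation}
\bigl[X_\phi(L\beta, \omega R)\bigr]_{ij} = \int_{\sum_k L_{ik}\beta_k} \Bigl( \sum_l \omega_l R_{lj} \Bigr).
\end{equation}
Next, I would invoke linearity in the chain argument, which is exactly the content of \cref{int-embedded-simplicial}: the integral of a fixed form over $\sum_k L_{ik}\beta_k$ equals $\sum_k L_{ik} \int_{\beta_k}$ of that form. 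Then I would invoke linearity in the form argument, which is immediate from \cref{integration_eq} (the integrand depends linearly on the scaling functions $\alpha_I$, and Riemann integration is linear). Combining both,
\begin{equation}
\bigl[X_\phi(L\beta, \omega R)\bigr]_{ij} = \sum_k \sum_l L_{ik}\, R_{lj} \int_{\beta_k} \omega_l = \sum_k \sum_l L_{ik}\, \bigl[X_\phi(\beta,\omega)\bigr]_{kl}\, R_{lj},
\end{equation}
which is precisely the $(i,j)$-entry of the product $L X_\phi(\beta,\omega) R$.

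There is essentially no obstacle here: the only subtlety worth flagging is that linearity in the chain argument is a \emph{definitional} extension (one defines $\int_{\sum \lambda_\sigma \sigma} \omega := \sum \lambda_\sigma \int_{\phi_\sigma}\omega$ in \cref{int-embedded-simplicial}), whereas linearity in the form argument is an honest consequence of linearity of the Riemann integral applied termwise to the decomposition in \cref{form_decomp}. Both are by now routine, so the proof is a short two-line computation once the correct indices are fixed.
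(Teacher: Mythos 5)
Your proof is correct and follows essentially the same route as the paper's: an entrywise computation using linearity of integration in the chain argument (the definitional extension in \cref{int-embedded-simplicial}) and in the form argument. The only cosmetic difference is that you handle $L$ and $R$ simultaneously in a single double sum, whereas the paper verifies the left and right actions separately; the underlying bilinearity argument is identical.
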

A staple requirement of geometric deep learning architectures is that
they should be permutation and orientation equivariant. In our setting,
these properties are a direct corollary of \Cref{multi-linearity}.
\begin{corollary}[Equivariance]
Let $\beta = (\beta_1, \ldots, \beta_m)$ be a basis for the $k$-chains
  $C_k(\cS;\R)$ of an embedded oriented simplicial complex $\phi\colon\cS
  \to \R^n$.
\begin{enumerate}
        \item (Permutation) $X_\phi(P \beta,\omega)
          = P X_\phi(\beta,\omega)$ for all permutation matrices $P \in
          M^{m \times m}(\R)$.
        \item (Orientation) $ X_\phi(Q \beta, \omega)
          = Q X_\phi(\beta,\omega)$ for all signature
          matrices\footnote{A signature matrix is a diagonal matrix with $\pm 1$ entries.
          } $Q \in M^{m \times m}(\R)$.
    \end{enumerate}
\end{corollary}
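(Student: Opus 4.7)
The plan is to derive both statements as immediate specialisations of Proposition \ref{multi-linearity}, since permutation matrices and signature matrices both live in $M^{m \times m}(\R)$ and the integration matrix is defined precisely so that such row actions pass through integration. First I would recall that a permutation matrix $P$ acts on the chain-valued column vector $\beta$ by reshuffling its entries, that is $(P\beta)_i = \beta_{\pi(i)}$ for the permutation $\pi$ encoded by $P$, and that a signature matrix $Q$ acts by $(Q\beta)_i = \pm \beta_i$. Both actions are linear in $\beta$, so the hypotheses of \Cref{multi-linearity} are met.

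Next I would apply \Cref{multi-linearity} with $L = P$ (respectively $L = Q$) and $R = I_\ell$ the identity. This yields
\begin{equation}
X_\phi(P\beta, \omega) = P\, X_\phi(\beta, \omega) \cdot I_\ell = P\, X_\phi(\beta, \omega),
\end{equation}
and analogously $X_\phi(Q\beta, \omega) = Q\, X_\phi(\beta,\omega)$. For the orientation part it is worth spelling out that flipping the orientation of a $k$-simplex $\sigma$ corresponds to the chain $-\sigma \in C_k(\cS;\R)$, and by \Cref{int-embedded-simplicial} integration against $-\sigma$ negates $\int_{\phi_\sigma}\omega_j$; this is exactly what the diagonal $-1$ entry of $Q$ produces on the left-hand side, confirming that the signature matrix action encodes the simplex-wise reorientation faithfully.

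Nothing here is subtle: the two statements are really unpacking what $\R$-linearity of the map $\beta \mapsto X_\phi(\beta, \omega)$ means on the specific subgroups of invertible matrices consisting of permutations and sign flips. The only minor point one should verify is that \Cref{multi-linearity} does apply for $L = P, Q$, which it does since $P, Q \in M^{m\times m}(\R)$ with no further restrictions imposed in its statement. The hypothesis that $\beta$ is a basis is not strictly needed for the identity itself, but is natural because it ensures $P\beta$ and $Q\beta$ remain a basis and so the resulting matrix still encodes the full $k$-chain data of $\cS$. I anticipate no real obstacle; the proof is essentially a one-line corollary.
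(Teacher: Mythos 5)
Your proposal is correct and matches the paper's own (implicit) argument exactly: the paper states the corollary as a direct consequence of Proposition~\ref{multi-linearity}, obtained by taking $L = P$ or $L = Q$ and $R$ the identity, which is precisely what you do. Your additional remark linking the $-1$ entries of $Q$ to simplex reorientation via \Cref{int-embedded-simplicial} is a faithful unpacking of the same point and introduces no new machinery.
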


\section{Architecture} \label{sec:architecture}

\paragraph{Embedded Chain Data.}
The input data $\mathcal{D} = \{ (\cS_\alpha, \phi_\alpha, \beta_\alpha)
\}$ to our learning pipeline consists of a set of triples $(\cS_\alpha,
\phi_\alpha, \beta_\alpha)$ of \emph{embedded chain data}, where
\begin{itemize} 
\item $\mathcal{S}_\alpha$ is a simplicial complex
\item $\phi_\alpha \colon \mathcal{S}_\alpha^0 \to \R^n$ is an affine embedding and
\item $\beta_\alpha \in \bigoplus_{m_\alpha} C_k(\mathcal{S};\R)$ is a tuple of $m_\alpha$ data $k$-chains on $\mathcal{S}_{\alpha}$.
\end{itemize}
If no chains are provided, one can take the standard basis of oriented
$k$-simplices of each simplicial complex as the input chains. The
canonical example is a dataset consisting of graphs $\{
(\mathcal{G}_\alpha, \phi_\alpha, \beta_\alpha) \}$ with node features
$\phi_\alpha \colon \mathcal{G}^0_\alpha \to \R^n$ and $\beta_\alpha$
corresponding to the standard edge chains with arbitrary orientations.

\paragraph{Approximating Integration Matrices.}
The main departure from standard geometric deep learning architectures
is the transformation from embedded $k$-chain data to integration
matrices. The high-level structure of this process is presented in
\Cref{alg:int_matrix}. Given a tuple of neural $k$-forms, each
represented by an MLP $\psi_j \colon \R^n \to \R^{\binom{n}{k}}$, integrals
of embedded $k$-simplices $\phi_\sigma \colon \Delta^k \to \R^n$ are
calculated by a finite approximation~(\cref{sec:A-integration})
of the integral formula 
\begin{equation}
  \int_{\phi_\sigma} \omega_j = \sum_I \int_{\Delta^k} \psi_{I,j} (\phi_{\sigma}(t)) \varepsilon^I\big(D \phi_{\sigma} (t)\big) dt \approx \texttt{VolApprox}( \int_{\phi_{\sigma}} \omega_j)
\end{equation}
appearing in \Cref{integration_eq}. Integrals of chains $\beta_i$ are
calculated as linear combinations. \textbf{The entries of the integration matrix
$X_\phi(\beta,\omega^\psi)$ thus depend in a differentiable manner on
the component functions $\psi_{I,j}$ of the underlying MLP}.

\paragraph{Readout Layers.}
Once an embedded simplicial complex is transformed into an integration
matrix, it is then fed into a \emph{readout layer}~(as in the case for standard
simplicial or graph neural networks). The output of a readout layer is
a single representation of the entire complex, and should not depend on
the number of simplices if one wishes to compare representations among
different complexes. Common read-out layers include summing column
entries, and $L_1$ or $L_2$ norms of the columns. Note that only the
latter two are invariant under change of orientation. 

\paragraph{Neural $k$-form Backpropagation.} For a fixed dataset of
embedded $k$-chains, neural $k$-forms are the learnable feature
functions. \Cref{alg:kform_backprop} and \Cref{fig:architecture}
illustrate the basic pipeline for performing backpropagation of neural
$k$-forms over embedded chain data and a loss function. Neural $k$-forms
and a user-determined classifier are randomly initialised using
\emph{any} standard MLP initialisation. Each embedded chain data
$(\mathcal{S}_\alpha, \phi_\alpha, \beta_\alpha)$ is transformed into an
integration matrix by \Cref{alg:int_matrix}, before being read out,
classified, and evaluated against a loss function. 

\begin{figure}[tbp]
    \centering
    \includegraphics[width=\linewidth]{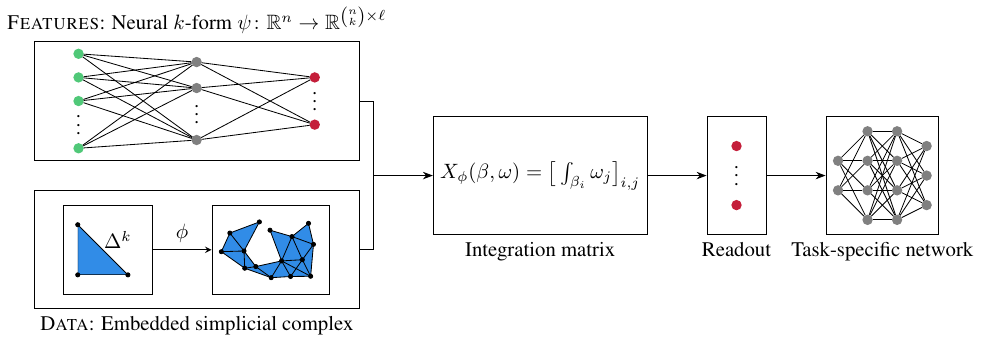}
    \caption{
    A schematic of our proposed neural $k$-form learning architecture.}
\label{fig:architecture}
\end{figure}

\paragraph{Implementation.}
Our methods can be realised using any deep learning framework that permits training an MLP.
We created a proof-of-concept implementation using \texttt{PyTorch-Geometric}~\citep{PyG} and \texttt{PyTorch-Lightning}~\citep{Torch_Lightning} 
and make it publicly available under
\url{https://github.com/aidos-lab/neural-k-forms}.

\begin{algorithm}[t]
      \caption{Generate Integration Matrix}
      \label{alg:int_matrix}
\begin{algorithmic}
\State \textbf{Inputs:} Embedded chain data $(\mathcal{S},\phi, \beta)$
with $k$-simplices $\sigma \in \cS^k$;\\ $k$-chains $\beta_i = \sum
\lambda_{\sigma,i} \sigma \in C_k(\cS;\R)$; $i = 1, \ldots, m$; \\
MLPs\footnote{Equivalently, a single MLP $\psi\colon \R^n \to
\R^{\binom{n}{k} \times \ell}$ where $\psi_j = \pi_{I_j} \psi$}
$\psi_j\colon \R^n \to \R^{\binom{n}{k}}$; $j = 1,\dots,
\ell$.

\noindent\hrulefill
\State $X = \mathbf{0} \in M_{m,\ell}(\R)$ \Comment{Initialise $X$ as the $0$-matrix with $m \times \ell $ elements }
\For {$ 1 \leq j \leq \ell$} \Comment{Iterate over forms}
    \For {$1 \leq i \leq m$} \Comment{Iterate over chains}
        \State $X_{i,j} \gets \sum \lambda_{\sigma,i} \texttt{VolApprox}(\int_{\sigma} \omega^{\psi_j})$ 
  \EndFor
\EndFor
  \State \textbf{Return:} $X_\phi(\beta,\omega^\psi) = X$ \Comment{Return Integration Matrix}
\end{algorithmic}

\end{algorithm}

\begin{algorithm}[t]
      \caption{Neural $k$-form Backpropagation}
      \label{alg:kform_backprop}
\begin{algorithmic}
\State \textbf{Data:} $\mathcal{D} = \{ ( \cS_\alpha, \phi_\alpha,
\beta_{\alpha} \big) \mid \phi_\alpha\colon \cS_{\alpha} \to \R^n, \beta_{\alpha} \in \bigoplus_{m_\alpha} C_k(\cS;\R), m_\alpha \in \N \}$ 
\State \textbf{Initialise} MLPs $\psi_j\colon \R^n \to
\R^{\binom{n}{k}}; 1 \leq j \leq \ell$; classifier $\eta\colon \R^\ell \to \R^{\ell'}.$

\noindent\hrulefill
\For {$(\mathcal{S}_\alpha, \phi_\alpha, \beta_\alpha) \in \mathcal{D}$}
\State $X_{\phi_\alpha}(\beta_\alpha,\omega^\psi) \gets \texttt{IntegrationMatrix}(\phi_\alpha, \beta_\alpha,\psi)$
\State $X \gets \texttt{Readout}(X_{\phi_\alpha}(\beta_\alpha,\omega^\psi))$ \Comment{Vector Representation}
\State $X \gets \eta(X)$ \Comment{Prediction}
\State $L = \texttt{Loss}(X)$
\State $\texttt{Backward}(L,\psi, \eta)$ \Comment{Update $k$-forms and Classifier}
\EndFor
\end{algorithmic}

\end{algorithm}

\section{Experiments and Examples}
\label{sec:experiments}

This section presents examples and use cases of neural $k$-forms in
classification tasks, highlighting their \emph{interpretability} and
\emph{computational efficiency}.

\subsection{Synthetic Path Classification}

Our first experiment is classifying paths in $\R^2$. The goal of the
experiment is pedagogical; it illustrates how to interpret the learned
$1$-forms rather than benchmark the method. A piecewise linear path in
$\R^n$ is a simplicial complex determined by an ordered sequence of node
embeddings. The $1$-simplices are linear maps $\sigma_i\colon I \to
\R^n$ from the $i$-th embedded node and to the $(i+1)$-st embedded node,
where the full path corresponds to the $1$-chain $\sum_i \sigma_i$. The
integral of this chain against a 1-form corresponds to the path
integral. \cref{fig:final-vf1} shows three classes of piecewise linear
paths\footnote{The orientation is indicated by the arrow at the end of
each path.} in $\R^2$ that we will classify using our method.
The idea is that we will learn three $1$-forms, which correspond to each
class. To initialise the three $1$-forms in $\R^2$, we randomly
initialise an MLP $\psi\colon \R^2 \to \R^{2\times 3}$. A forward pass
consists of two stages: \emph{integration} of the $1$-simplices in the path
against each of the three forms to produce an integration matrix,
followed by taking a \emph{column sum} and applying \texttt{softmax} to produce
a distribution over which we perform \texttt{CrossEntropyLoss}.
The $i$-th column sum corresponds to a path integral of the path against
the $i$-th form; the prediction of a path is thus determined by which
feature $1$-form produces the highest path integral. Backpropagation
against this loss function thus attempts to modify the $i$-th $1$-form
so that it produces a more positive path integral against the paths in
class $i$ and more negative otherwise. 

\cref{fig:final-vf1} also shows the feature $1$-forms as
vector fields over their corresponding classes of paths. Note that the vector
fields are randomly initialised and updated while the paths are the
fixed data points. The learned $1$-forms of each class resemble vector
fields that roughly reproduce the paths in their class as integral flow
lines, and are locally orthogonal or negatively correlated to paths in
the other classes. This is a direct result of the objective function,
which attempts to maximise the path integral within each class and
minimise it for others. \cref{fig:final-embeddings1} depicts the
paths as points coloured by class, where the coordinates correspond to
the path integrals against the three learned $1$-forms. We observe a clear
separation between the classes, indicating that the representations
trivialise the downstream classification task. We also compare the model
with a standard MLP in \Cref{app:exp-details}.

\begin{figure}
\centering
\subcaptionbox{Learned $1$-forms corresponding to paths in each class.\label{fig:final-vf1}}{
  \includegraphics[width=0.75\textwidth]{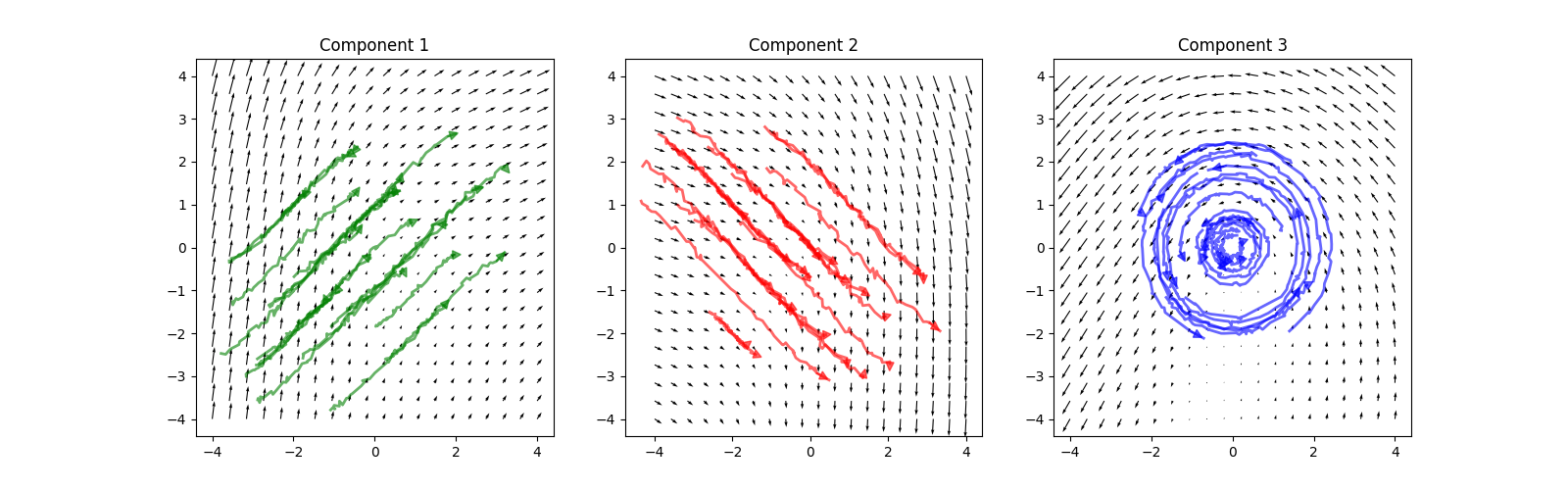}}\subcaptionbox{Path representations.\label{fig:final-embeddings1}
}{\includegraphics[width=0.25\textwidth]{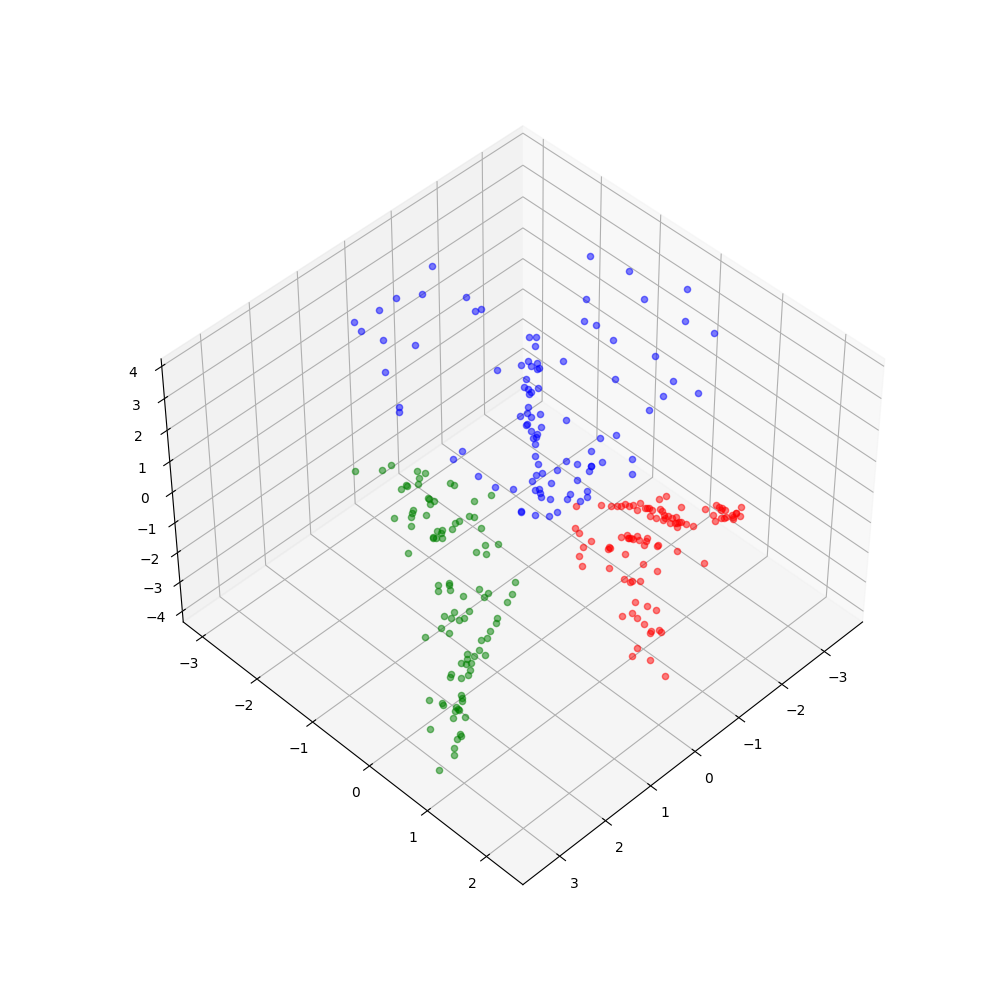}
}
\caption{Synthetic Path Classification via Learnable $1$-forms. The $1$-form~(a
  vector field in this case) adjusts itself to the data, resulting in
  distinct path representations.
}
\label{fig:synth-path-ex}
\end{figure}

\subsection{Synthetic Surface Classification}
This example demonstrates how our framework is of interest to deal
with higher-dimensional data, i.e.\ simplicial complexes of dimension
$k\geq2$. Conceptually, the difficulty is going from 1-dimensional
objects to $k$-dimensional objects with any $k\geq 2$. We restrict
ourselves to $k=2$ since higher dimensions are similar to this case.
The data we consider here are triangulated surfaces embedded in $\R^3$,
the underlying combinatorial complex is always the same---
a triangulation of a square---but the embeddings are different.
For a given surface, the embedding of each  $2$-simplex is given by
linear interpolation between the coordinates of its three vertices in
$\R^3$. We consider two classes of surfaces, in the first class the
embeddings of the nodes are obtained by sampling along a sinusoidal
surface in the $x$-direction, while the second class is given by a surface
following a sinusoidal shape in the $y$-direction, in each case with
added translation and noise. 
We use learnable $2$-forms $\omega_1, \omega_2$ on $\R^3$ represented by
an MLP $\psi\colon\R^3 \to \R^{3\times 2}$. In a forward pass of the
model, we integrate the $2$-forms given by the MLP over the $2$-simplices of
a surface. Each point in $\R^3$ has a value in $\R^{3\times 2}$ given by
the MLP evaluated at that point. The process of integrating over the
$2$-simplices corresponds to integrating the point-wise evaluation of
the MLP over the regions of $\R^3$ defined by each embedded $2$-simplex.
This process gives the integration matrix of the forms $\omega_1,
\omega_2$ over the $2$-simplices of the complex. 

\setlength\intextsep{0pt}
\begin{wrapfigure}[13]{l}{0.35\linewidth}   
  \centering
  \includegraphics[width=\linewidth]{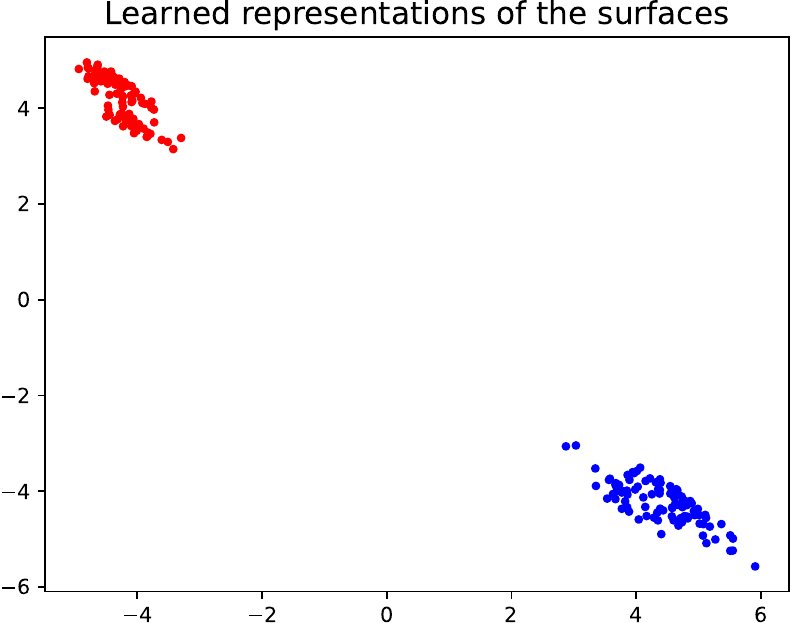}
  \caption{Synthetic surface classification via learnable $2$-forms.
  }
  \label{fig:final-surface-embeddings} 
\end{wrapfigure}
The next step is the readout layer, which sums the entries in each
column of the integration matrix, corresponding to summing the value of
each cochain over the simplicial complex, giving the total surface
integral. As there are two $2$-forms this yields a vector representation
in $\R^2$, each entry corresponding to a $2$-form. 
Finally, these representations are then passed through
a \texttt{softmax} for classification and \texttt{CrossEntropyLoss} is
used as a loss function. The MLP is then updated by backpropagation. 
\cref{fig:final-surface-embeddings} plots the representations in
$\R^2$ learned through the model described above. Each point represents
a surface in the data set and the colour is given by the class of the
corresponding surface, showing the clear separation learned by the
neural $2$-forms. Further details can be found in \Cref{sec:surface_ap}. 

\begin{table}[tbp]
  \centering
  \caption{Results~(mean accuracy and standard deviation of a $5$-fold
    cross-validation) on small graph benchmark datasets that exhibit
    `geometrical' node features.
Parameter numbers are approximate because the number of classes 
    differ.
   }
  \label{tab:Benchmark datasets small}
  \sisetup{
    detect-all              = true,
    table-format            = 2.2(2),
    separate-uncertainty    = true,
    retain-zero-uncertainty = true,
  }\let\b\bfseries \newcommand{\B}{}\resizebox{\linewidth}{!}{\begin{tabular}{lrSSSSSS}
      \toprule
                                         & {Params.} & {BZR}          &   {COX2}       & {DHFR}         & {Letter-low}   & {Letter-med}   & {Letter-high}  \\
      \midrule
      \B EGNN~\citep{Satorras21a}        & 1M        & \B79.51\pm1.87 & \B78.16\pm0.46 &\B 64.02\pm2.68 & \B93.20\pm0.68 & \B65.91\pm1.60 & \b\B68.98\pm 1.80\\
      GAT~\citep{Velickovic22a}          & 5K        & \b81.48\pm2.90 &   80.73\pm2.45 & \b  73.02\pm2.54 &   90.04\pm2.23 &   63.69\pm5.97 &     43.73\pm 4.13\\
      GCN~\citep{kipf2017semisupervised} & 5K        &  79.75\pm0.68 & \b79.88\pm1.65 &   70.12\pm5.43 &   81.38\pm1.57 &   62.00\pm2.07 &     43.06\pm 1.67\\
      GIN~\citep{xu2018how}              & 9K        &   79.26\pm1.03 &   78.38\pm0.79 &   68.52\pm7.38 &   85.00\pm0.59 &   67.07\pm2.47 &     50.93\pm 3.47\\
      \midrule                    
      N$k$F (ours)                       & 4K        &   78.77\pm0.55 &   80.30\pm2.42 &  64.42\pm2.09 & \b93.42\pm1.94 & \b67.69\pm1.28 &    62.93\pm4.13\\
      \bottomrule
    \end{tabular}}\end{table}

\subsection{Real-World Graphs}
In this experiment we attempt to use our model to leverage the geometry
of non-equivariant node features for \emph{graph classification} on
a set of benchmark datasets.
The basic architecture of the model follows that described in the
\emph{Architecture and Parameters} section. Graphs are represented as
a $1$-chain consisting of all their edges. We randomly initialise a set
of $\ell$ feature $1$-forms, produce and read-out the integration
matrices before feeding through a simple MLP classifier and performing
backpropagation. We use an $L_2$-column readout layer so the network is
invariant under edge orientations. 
Please refer to \cref{app:Graph Benchmark Datasets} for specific
architectural details.
We use state-of-the-art graph neural networks~(GNNs) following
a recently-described benchmark~\citep{Dwivedi23a}, experimenting with
different numbers of layers.
As an additional comparison partner, we also use a recent equivariant
GNN architecture that is specifically geared towards analysing data
with an underlying geometry~\citep{Satorras21a}.
\Cref{tab:Benchmark datasets small} depicts the results on smaller graph
datasets in the TU dataset \citep{Morris+2020}.
Here the node features that carry both equivariant
information~(corresponding to 3D coordinates of molecules, for instance)
and non-equivariant information (one-hot atomic type, weight, etc.). For
the non-equivariant models (ours, GCN, GAT) the position features are
omitted. 
Overall, our method exhibits competitive performance, in particular given the
fact that it does not make use of \emph{any} message passing and has a smaller
parameter footprint.
In \cref{tab:Benchmark datasets large} we compare our model on the
larger datasets in the `MoleculetNet' benchmark~\citep{Wu18a}. Note that
the datasets we have chosen have no provided 'positional' node features,
so the given node features (i.e.\ atomic weights, valence, etc.) are not
equivariant and cannot be compared with EGNN.
As \cref{tab:Benchmark datasets large} shows, our model based on neural
$k$-forms outperforms state-of-the-art graph neural networks in terms of
AUROC, using a fraction of the number of parameters~(this also holds for
accuracy and average precision, which are, however, not typically
reported for these data sets).

\begin{table}[tbp]
  \centering
  \caption{Results~(mean AUROC and standard deviation of $5$ runs)
    on benchmark datasets from the `MoleculeNet' database~\citep{Wu18a}.
While the GNNs also train for smaller numbers of parameters, we
    observed significant drops in predictive performance.
We thus report only the best results for GNNs, using the most common
    models described in the literature.
  }
  \label{tab:Benchmark datasets large}
  \small
  \sisetup{
    detect-all              = true,
    table-format            = 2.2(4),
    separate-uncertainty    = true,
    retain-zero-uncertainty = true,
  }\let\b\bfseries \begin{tabular}{lrSSS}
     \toprule
                                        & {Params.}      & {BACE}         & {BBBP}         & {HIV}\\
     \midrule
GAT~\citep{velickovic2018graph}    & 135K           &  69.52\pm17.52 &  76.51\pm 3.36 &  56.38\pm 4.41\\
     GCN~\citep{kipf2017semisupervised} & 133K           &  66.79\pm 1.56 &  73.77\pm 3.30 &  68.70\pm 1.67\\
     GIN~\citep{xu2018how}              & 282K           &  42.91\pm18.56 &  61.66\pm19.47 &  55.28\pm17.49\\
     \midrule
     N$k$F~(ours)                       & 9K             &\b83.50\pm0.55  &\b86.41\pm 3.64 &\b76.70\pm 2.17\\
     \bottomrule
  \end{tabular}\end{table}

\section{Discussion}

\paragraph{Summary.}
We developed \emph{neural $k$-forms}, a method for learning
representations of simplicial cochains to solve tasks on embedded graphs
and simplicial complexes.
Deviating from the predominant paradigms in \emph{geometric deep
learning}, our method adopts a fundamentally different novel perspective
based on integration of forms in the ambient feature space.
We have shown the feasibility of the method through a comprehensive
experimental suite, demonstrating its effectiveness using only a very
small number of parameters.
Notably, our method does not utilise any kind of message passing, and we
hypothesise that it is possible that this implies that issues like
\emph{over-smoothing} may affect our method less than graph neural
networks.

\paragraph{Limitations.} 
A conceptual limitation is that we require (at least) the existence of
node or vertex coordinates, i.e.\ our method only operates on embedded
complexes.
The computational feasibility of higher order $k$-forms in large
embedding spaces is another possible limitation. Indeed, the number of
monomial $k$-forms in $\R^n$ is $\binom{n}{k}$, and similar issues arise
for numerical integration over higher-dimensional simplices. 
Further, we have only benchmarked our method on graph classification
tasks.
It remains to be seen whether the method performs as well on
graph regression tasks, as well as for benchmark learning tasks on
higher-dimensional simplicial complexes.\footnote{We note that
a comprehensive, agreed-upon framework for benchmarking simplicial
neural networks has yet to be established.}
Finally, our model currently cannot deal with node features that are
equivariant with respect to the embedding space.

\paragraph{Outlook.}
We aim to study these issues, as well as the behaviour of our methods in
the context of long-range dependencies, in a follow-up work. 
In addition, since our neural $k$-form formulation is equivalent to an
MLP, the learning process may benefit from the plethora of existing
methods and tricks that are applied to optimise MLPs in practice.
We argue that our experiments point towards the utility of using
a geometric interpretation of our representations as \emph{integrals}
over $k$-forms may provide valuable insights to practitioners.
Lastly, we provide a small example of a \emph{Convolutional $1$-form
network} in \cref{sec:add-exp} that may lead to better incorporation of
equivariant node embeddings. We provide this auxiliary example as part
of a broader future work program of rebuilding common ML architectures
on top of neural $k$-forms rather than message passing schemes.

\subsubsection*{Acknowledgements}

B.R.\ is supported by the Bavarian state government with
funds from the \emph{Hightech Agenda Bavaria}.
K.M.\ received funding from the European Union’s Horizon 2020 research and innovation programme under the Marie Skłodowska-Curie grant agreement No 859860. 
The authors gratefully acknowledge the Leibniz Supercomputing Centre for
funding this project by providing computing time on its Linux cluster.
The authors also acknowledge Darrick Lee, Kathryn Hess, Julius von
Rohrscheidt, Katharina Limbeck, and Alexandros Keros for providing
useful feedback on an early manuscript.
They also wish to extend their thanks to the anonymous reviewers and the
area chair, who believed in the merits of our work.

\bibliography{biblio}
\bibliographystyle{iclr2024_conference}

\clearpage
\appendix

\section{Differential Forms in \texorpdfstring{$\R^n$}{Rn} Background} 

\label{diff_form_appendix}

In this section, we provide a basic background in the theory of
differential forms. The more complete references see \citet{Lee03} for
basic differential geometry and \citet{Jost_2002} for Riemannian
geometry of $k$-forms. Moreover, see \citet{Tao_2009} for a short but
highly illuminating article on the intuition behind differential forms.

\paragraph{Tangent and Cotangent Bundles on $\R^n$.} In $\R^n$, the \textit{tangent space} $T_p(\R^n)$ at a point $p \in \R^n$ is the vector space space spanned by the partial derivatives $\D /\D x_i(p)$. The \textit{cotangent space} at $p$ is the linear dual $T^*(\R^n)$ of the tangent space; i.e. the space of linear maps $\mathsf{Hom}_\R(T_p(\R^n),\R)$. Note that both spaces are isomorphic to $\R^n$. The \textit{tangent bundle} is the space $T(\R^n) = \sqcup_p T_p(\R^n)$ consisting of gluing all tangent space together, where the topology is induced by the projection map $\pi : T(\R^n) \to \R^n$. Likewise, the \textit{cotangent bundle} is the space $T^*(\R^n) = \sqcup_p T_p^*(\R^n)$. 

The space of vector fields $\mathfrak{X}(\R^n)$ over $\R^n$ is the space of sections of the tangent bundle; that is, maps $v : \R^n \to T_*(\R^n)$ such that $\pi \circ v = id_{\R^n}$. Vector fields decompose into the form $\sum_i v_i(p) \D/\D x_i(p)$, where $v_i : \R^n \to \R$. The space of $1$-forms $\Omega^1(\R^n)$ over $\R^n$ consists of the sections $\omega : \R^n \to T^*(\R^n)$ of the cotangent bundle. 

\paragraph{Riemannian Structure on $\R^n$.} The space $\R^n$ is a Riemannian manifold. That is, it has a non-degenerate bilinear form 
\begin{equation}
    \langle -, - \rangle_p : T_p(\R^n) \otimes T_p(\R^n) \to \R.
\end{equation} The inner product is defined on by linear extension of the formula \begin{equation} \langle \D/\D x_i(p), \D/\D x_j(p) \rangle_p = \langle x_i, x_j \rangle \end{equation} where the second inner product is the standard inner product on $\R^n$. Over $\R^n$, the spaces of vector fields and $1$-forms are isomorphic via the \textit{sharp isomorphism}
\begin{align} 
\# : \mathfrak{X}(\R^n) & \to \Omega^1(\R^n)\\
\sum_i v_i \D/\D x_i & \mapsto \sum_i v_i dx_i
\end{align} where $dx_i$ is the $1$-form which is locally $dx_i(p) = \langle -, \D/\D x_i(p) \rangle_p : T_p(\R^n) \to \R$.

\paragraph{Exterior Products.} Let $V$ be a real vector space. Recall that an alternating $k$-covector on $V$ is a map
\begin{equation} 
\alpha : \bigotimes_k V \to \R
\end{equation} that is alternating with respect to permutations.  That is, that
\begin{equation} 
\alpha(v_1, v_2, \ldots, v_k) = (-1)^{sgn(\tau)} \alpha(v_{\tau(1)}, v_{\tau
(2)}, \ldots, v_{\tau(k)}).
\end{equation} The $k$-th exterior product $\Lambda^k(V^*)$ over $V$  is the space of alternating $k$-covectors.

\paragraph{$k$-Forms.} The $k$-th exterior product of the cotangent bundle $\Lambda^k T^* (\R^n)$ is the tensor bundle defined by locally taking the $k$-th exterior product $\Lambda^k T^*_p (\R^n)$ of each cotangent space. A \textit{differential $k$-form} $\omega \in \Omega^k(\R^n)$ over a $\R^n$ is a smooth section of the $k$-th exterior product of the cotangent bundle $\Lambda^k T^* M$. The space of forms $\Omega^*(\R^n)$ of any dimension has an algebra structure given by the wedge product. The wedge product is multi-linear and satisfies anti-commutativity
\begin{equation} \label{wedge-anticommutativity}
    dx_i \wedge dx_j = - dx_j \wedge dx_i
\end{equation}
as well as permutation equivariance
\begin{equation} \label{wedge-perm-equiv}
    dx_{i_1} \wedge \ldots \wedge dx_{i_k} = (-1)^{sgn(\tau)} dx_{i_{\tau(1)}} \wedge \ldots \wedge dx_{i_{\tau(k)}}
\end{equation} for permutations $\tau$. As described in the body of the paper, $k$-forms have a canonical monomial decomposition given by
\begin{equation}
    \omega = \sum_I \alpha_I dx_I.
\end{equation} 
where $dx_I = dx_{i_1} \wedge \ldots \wedge dx_{i_k}$ for a multi-index $I = (i_1, \ldots, i_k)$ and scaling maps $\alpha_I : \R^n \to \R$.

\paragraph{Types of $k$-forms.} Differential $k$-forms are $k$-forms where the scaling maps $\alpha_I$ are smooth; this is equivalent to the condition $\omega$ that is a smooth section of $k$-th exterior product of the cotangent bundle. When working with both neural $k$-forms and over non-compact spaces like $\R^n$, we often need to define other types of forms. 
\begin{enumerate} 
\item A $k$-form $\omega \in \Omega^k_c(\R^n)$ is \textit{compactly supported} whenever each of the $\alpha_I$ are compactly supported.
\item A $k$-form $\omega \in L_2 \Omega^k(\R^n)$ is $L_2$ whenever each $\alpha_I$ is square integrable. 
\item A $k$-form $\omega \in \Omega^k_{PL}(\R^n)$ is \textit{piecewise linear} if there exists a triangulation of $\R^n$ such that each $\alpha_I$ is piece-wise smooth over the triangulation.
\end{enumerate}

\paragraph{Inner Products on $k$-forms.} The choice of Riemannian metric on $\R^n$  extends to an inner product on the $k$-th exterior product of the cotangent space by
\begin{align*} 
\langle -, - \rangle_p : \Lambda^k T^*_p(\R^n) \otimes \Lambda^k T^*_p(\R^n) & \to \R\\ 
\langle \omega_1 \wedge \ldots \wedge \omega_k, \eta_1 \wedge \ldots \wedge \eta_k\rangle_p & \mapsto \det \langle \omega_i, \eta_j \rangle_{i,j}.
\end{align*} where $\omega^i, \eta_j \in \Omega^1(\R^n)$. This induces an inner product over the compactly supported $k$-forms $\Omega_c^k(M)$ by integration
\begin{equation} 
\langle \omega_1 \wedge \ldots \wedge \omega_k, \eta_1 \wedge \ldots \wedge \eta_k \rangle = \int_{p \in \R^n} \langle \omega_1 \wedge \ldots \wedge \omega_k, \eta_1 \wedge \ldots \wedge \eta_k
\rangle_p \, d Vol(\R^n)
\end{equation} against the volume form $Vol(\R^n)$ on $\R^n$. 

\paragraph{Orthonormal Coframes.} Equip $\R^n$ with the usual Riemannian structure, the compactly supported $k$-forms $\Omega^k_c(\R^n)$ with the inner product $\langle -, -\rangle_\Omega$ described above and induced norm $$\lVert \omega \rVert^2_\Omega = \langle \omega, \omega \rangle_\Omega$$ for $\omega \in \Omega^k_c(\R^n)$. With this structure, the monomial $k$-forms form an orthornomal coframe in the sense that
\begin{equation} \label{orth} 
\langle dx^I, dx^{I'} \rangle_p = \begin{cases} 1 & \text{if } I = I' \\ 0 & \text{else} \end{cases} \end{equation} for each $p \in \R^n$.

\paragraph{Integration of $k$-forms.} Following \citep{taylor2006measure}, we define integration of monomial $k$-forms $\omega = \sum_I \alpha_I dx_I$ then extend via linearity to arbitary $k$-forms. Let $\phi : \Delta^k \to \R^n$ be a smooth map and coordinatize $\Delta^k$ with $(t_1, t_2, \ldots, t_k)$ as above.

The \textit{pullback} map of $\phi$ is a map $\phi^* : \Omega(\R^n)\to \Omega(\Delta^k)$ taking forms on $\R^n$ to forms on $\Delta^k$. In coordinates, the pullback $\phi^* \omega \in \Omega^k(\Delta^k)$ of $\omega$ along $\phi$ is defined by the formula
\begin{equation} \label{pullback}
    \phi^* \omega = \sum_I \alpha_I (\phi^* dx_{i_1}) \wedge (\phi^* dx_{i_2}) \wedge \ldots \wedge (\phi^* dx_{i_k})
\end{equation} where
\begin{equation}
    \phi^* dx_{i} = \sum_j \dfrac{\D \phi_i}{\D t_j} dt_j
\end{equation} and $\phi_i$ is the $x_i$-component of $\phi$. Note that by the monomial decomposition in \Cref{form_decomp}, the pullback can be written as
\begin{equation}
    \phi^* \omega = f dt_1 \wedge dt_2 \wedge \ldots \wedge dt_k
\end{equation} for some smooth function $f : \Delta^k \to \R$.

We define the integral $\int_\phi$ to be the standard Riemann integral
\begin{equation} \label{form-integral-def}
    \int_\phi \omega := \int_{\Delta^k} f dt_1 dt_2 \ldots dt_k.
\end{equation} 

The function $f$ can be computed explicitly by unwinding \ref{pullback} using the algebraic relations \ref{wedge-anticommutativity} and \ref{wedge-perm-equiv}. Namely, for the monomial $k$-form $\omega = \alpha_I dx_I$ we have
\begin{align} \phi^* \omega & = \alpha_I(\phi) \Big( \sum_j \dfrac{\D \phi_{i_1}}{\D t_j} dt_j \Big) \wedge \ldots \wedge \Big( \sum_j \dfrac{\D \phi_{i_k}}{\D t_j} dt_j \Big)\\
& = \alpha_I(\phi) \Bigg( \sum_\tau sgn(\tau) \dfrac{\D \phi_{i_1}}{\D t_{\tau(1)}} \cdots \dfrac{\D \phi_{i_k}}{\D t_{\tau(k)}} \Bigg) dt_1 \wedge \ldots \wedge dt_k\\
& = \alpha_I(\phi) \varepsilon^I( D\phi) dt_1 \wedge \ldots \wedge dt_k
\end{align} The above calculation in conjunction with \ref{form-integral-def} recovers the formula for the integral
\begin{equation}
    \int_\phi \omega = \sum_I \int_{\Delta^k} \alpha_I(\phi) \varepsilon^I(D \phi) dt
\end{equation} of a general $k$-form $\omega$ given in \ref{integration_eq}.

\begin{remark}
    In the special case that $\phi$ is an affine map, then the Jacobian is expressed as
    $$D\phi_{i,j} = \Big[ \phi^i(t_j) - \phi^i(t_0) \Big]_{i,j}$$
\end{remark}

\paragraph{Properties of Integration.} Throughout the paper, we refer to the linearity and orientation equivariance of integration of forms over simplices. These are a consequence of the following theorem.

\begin{proposition}[\citep{Lee03}, 16.21] Suppose $M$ is an $n$-manifold with corners and $\omega, \eta \in \Omega^n(M)$ are smooth $n$-forms. Then
\begin{enumerate}
    \item For $a,b \in \R$ we have
    $$ \int_M a\omega + b\eta = a \int_M \omega + b \int_M \eta$$
    \item Let $-M$ denote $M$ with the opposite orientation. Then
    $$\int_{-M} \omega = - \int_M \omega.$$
\end{enumerate}
    
\end{proposition}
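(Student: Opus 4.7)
The plan is to reduce both statements to standard linearity and change-of-variables results for Riemann integrals, via the chart-level definition of integration introduced in \cref{form-integral-def}. Since $M$ is an $n$-manifold with corners, I would fix an oriented atlas $\{(U_\alpha, \phi_\alpha)\}$ together with a subordinate partition of unity $\{\rho_\alpha\}$, so that any smooth $n$-form $\omega$ decomposes as $\omega = \sum_\alpha \rho_\alpha \omega$ with each summand compactly supported in a single chart. This reduces everything to proving the claims locally: once established on $\phi_\alpha(U_\alpha) \subseteq \R^n$ (or the appropriate corner model), summing over $\alpha$ yields the global statement.

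For part 1, I would exploit the linearity of the pullback operation on forms, namely $\phi^*(a\omega + b\eta) = a\phi^*\omega + b\phi^*\eta$. Writing the two pullbacks as top forms $f_\omega\, dt_1 \wedge \cdots \wedge dt_n$ and $f_\eta\, dt_1 \wedge \cdots \wedge dt_n$, the chart-level integral becomes a standard Riemann integral of $a f_\omega + b f_\eta$, where linearity is immediate. Well-definedness (independence of atlas and partition) follows from the transition formula between overlapping charts: orientation-preserving transition maps have positive Jacobian determinant, so the change-of-variables formula contributes no extraneous sign, and summing the local contributions against $\{\rho_\alpha\}$ respects the linear combination.

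For part 2, I would note that equipping $M$ with the opposite orientation amounts to post-composing each chart by an orientation-reversing diffeomorphism $r\colon \R^n \to \R^n$, for example the reflection $r(t_1, t_2, \ldots, t_n) = (-t_1, t_2, \ldots, t_n)$. Under pullback, $r^*(dt_1 \wedge \cdots \wedge dt_n) = (\det Dr)\, dt_1 \wedge \cdots \wedge dt_n = -\, dt_1 \wedge \cdots \wedge dt_n$, whereas the underlying Riemann integral over the image is unchanged in absolute value because the classical change-of-variables uses $|\det Dr| = 1$. The minus sign coming from the pullback of the volume form therefore survives, and assembling the local pieces via the partition of unity gives $\int_{-M} \omega = -\int_M \omega$.

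The only real technical point, and hence the main obstacle, is the rigorous verification of well-definedness: that the integral does not depend on the chosen oriented atlas or partition of unity. This is the standard content encapsulated by \citet{Lee03}, Proposition~16.21, and I would simply follow the argument there, invoking the transition-map calculation once to handle both overlap of charts and the transition from $M$ to $-M$. With well-definedness secured, parts 1 and 2 reduce to the short pullback computations above.
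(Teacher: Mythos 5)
The paper does not prove this proposition at all: it is quoted verbatim as a known result and attributed to \citet{Lee03}, Proposition~16.21, so there is no internal argument to compare against. Your sketch is a correct outline of the standard textbook proof of that result (partition of unity to localise, linearity of the pullback for part~1, and the discrepancy between the signed Jacobian determinant in the pullback of the top form and the absolute value appearing in the classical change-of-variables formula for part~2), which is exactly the argument the paper implicitly defers to.
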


\section{Integration of \texorpdfstring{$k$}{k}-forms in Practice}\label{sec:A-integration}
We give further practical details on how to approximate integrals in the case of $2$-forms. For more detailed and accessible introductions we recommend the following references \citep{taylor2006measure, Tao_2009}.

\paragraph{Explicit computations for $2$-forms.} In Section \ref{sec:experiments} we use integration of $2$-forms on $2$-dimensional simplicial complexes in order to classify surfaces. We will derive an explicit method to approximate the integral of $2$-forms in $\R^n$ over an embedded $2$-simplex based on the theory above. 

Recall that a $2$-form $\omega\in\Omega^2(\R^n)$ can be written in coordinates as 
\begin{equation}
    \omega = \sum_{0\leq i <j \leq n} \alpha_{i,j} dx_i\wedge dx_j,
\end{equation}
where $\alpha_{i,j}: \R^n\longrightarrow\R$ are smooth maps. We consider a map $\phi:\Delta^2 \longrightarrow \R^n$ giving the embedding of the standard $2$ simplex in $\R^n$. Using the expression from Equation \ref{form-integral-def}: 
$$\int_{\phi(\Delta^2)}\omega = \int_{\Delta^2}\phi^* (\omega)$$
we will show how to integrate $\omega$ on the embedded simplex $\phi(\Delta^2)$.

Using local coordinates to express the embedding $\phi(t_1, t_2) = (x_1, \dots ,x_n)$ we can write out explicitly the pullback of a $2$-form $\omega\in \Omega^2(\R^n)$ via the map $\phi$: 
\begin{align*}
    \phi^*\big(\sum_{0\leq i <j \leq n} \alpha_{i,j} dx_i\wedge dx_j \big) 
    = \sum_{0\leq i <j \leq n}\big(\alpha_{i,j}\circ \phi \big) \big( \frac{\partial x_i}{\partial t_1} dt_1 +  \frac{\partial x_i}{\partial t_2} dt_2 \big)\wedge\big( \frac{\partial x_j}{\partial y_1} dt_1 +  \frac{\partial x_j}{\partial t_2} dt_2 \big) \\
   = \sum_{0\leq i <j \leq n}\big(\alpha_{i,j}\circ \phi \big) \Big( \big( \frac{\partial x_i}{\partial t_1} dt_1 \wedge  \frac{\partial x_j}{\partial t_2} dt_2 \big) + \big( \frac{\partial x_i}{\partial t_2} dt_2 \wedge  \frac{\partial x_j}{\partial t_1} dt_1  \big) \Big)
\end{align*}
\begin{align}\label{eq:A}
     = \sum_{0\leq i <j \leq n}\big(\alpha_{i,j}\circ \phi \big) \Big( \underbrace{\frac{\partial x_i}{\partial t_1}\frac{\partial x_j}{\partial t_2} -  \frac{\partial x_i}{\partial t_2}\frac{\partial x_j}{\partial t_1}}_{A}\Big) dt_1\wedge dt_2 
\end{align}

In the computational context that we consider we want to integrate a $2$-form on the $2$-simplices of a complex embedded in $\R^n$. We set some notation here, each $2$-simplex $\sigma\in X$ is the image of an affine map $ \phi (t)= \Phi t + b : \Delta^2 \longrightarrow \R^n$, where $\Phi = D \phi = [\phi_{i,1}, \phi_{i,2}]_{0 \leq i \leq n }$ is a $(n\times 2)$-matrix. Since $\phi$ is an affine map the term $A$ in \ref{eq:A} becomes
    \begin{equation}
        A = \varepsilon^{(i,j)}( D \phi) = \det \Phi_i^j, 
    \end{equation}
    with $\Phi_i^j$ denoting the $(2\times 2)-$submatrix of $\Phi$ corresponding to the rows $i$ and $j$. 

Putting all this together we obtain: 
\begin{equation}
    \phi^*(\omega) = \sum_{0\leq i<j\leq n } \alpha_{i,j}(\phi) \det \Phi_i^j dt_1\wedge dt_2.
\end{equation}
So the integration becomes 

\begin{equation}
    \int_{\phi(\Delta^2)} \omega = \int_{\Delta^2}  \phi^*(\omega) = \int_{\Delta^2} \underbrace{\sum_{0\leq i<j\leq n } \alpha_{i,j}(\phi) \det \Phi_i^j}_{g} dt_1\wedge dt_2 .
\end{equation}

Computationally we cannot perform exact integration of $2$-forms so in practise we approximate the integral on $\Delta^2$ above by a finite sum, as is done with Riemann sums in the classical case. In general this is how the $\texttt{VolApprox}$ function is defined. To do so the first step is to subdivide the domain of integration $\Delta^2$ into a collection $\mathcal{S}$ of smaller simplices with vertices denoted by $v$. Then the integral is approximated by summing the average value of the map $g$ on each simplex of the subdivision: 
\begin{equation}
    \int_{\Delta^2} g dt_1\wedge dt_2 \approx \sum_{s\in \mathcal{S}^2} \frac{1}{3}\sum_{v\in s} g(v)\cdot vol(s).
\end{equation}

An explicit method of this sort can be easily generalized to higher dimensional simplices in order to compute integrals of $k$-forms on $k$-dimensional simplicial complexes.

\section{Proofs} \label{proof-appendix}

 \paragraph{Universal Approximation I.} We start with the proof of Proposition \ref{prop:univ-approx-1}. First, recall the well-known following universal approximation theorem for neural networks. We cite here a version appearing in \citep{Pinkus99}.

\begin{theorem}[Universal Approximation Theorem, Thm 3.1 \citep{Pinkus99}] \label{universal_function_approx}
    Let $\sigma \in C(\R, \R)$ be a non-polynomial activation function. For every $n,\ell \in \N$, compact subset $K \subset \R^n$, function $f\in C(K, \R^\ell)$ and $\epsilon > 0$ there exists:
    \begin{itemize}
        \item an integer $j \in \N$;
        \item a matrix $W_1 \in \R^{j \times n}$;
        \item a bias vector $b \in \R^{j}$ and
        \item a matrix $W_2 \in \R^{\ell \times j}$
    \end{itemize}
    such that 
    $$ \lVert f - g \rVert_\infty < \varepsilon$$
    where $g$ is the single hidden layer MLP
    $$g(x) = W_2 \sigma(W_1 x + b).$$
\end{theorem}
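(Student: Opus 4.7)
The plan is to follow the classical architecture of the Cybenko–Hornik–Pinkus proof, reducing in three steps: (i) multi-output to scalar output, (ii) $\R^n$ to $\R$ via ridge functions, and (iii) the one-dimensional density statement extracted from non-polynomiality of $\sigma$. For (i), I would write $f = (f_1, \ldots, f_\ell)$ and approximate each $f_i$ to within $\varepsilon$ in sup-norm by a scalar single-hidden-layer network $g_i(x) = v_i^\top \sigma(A_i x + c_i)$; concatenating the hidden weights vertically into a common $(W_1, b)$ and stacking the $v_i$ into appropriate rows of $W_2$ produces a joint $g$ of the stated form. For (ii), introduce
\[ \mathcal{N}(\sigma) = \mathrm{span}\{ \sigma(\lambda t + \theta) : \lambda, \theta \in \R \} \subset C(\R), \]
and invoke the classical density of finite sums of ridge functions $\sum_k h_k(w_k \cdot x)$ in $C(K)$ for compact $K \subset \R^n$; approximating each scalar $h_k$ on the compact set $\{w_k \cdot x : x \in K\}$ by an element of $\mathcal{N}(\sigma)$, then pulling back along $x \mapsto w_k \cdot x$, yields a single-hidden-layer network in $n$ variables. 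The problem collapses to proving the one-dimensional density $\overline{\mathcal{N}(\sigma)} = C(\R)$ in the topology of uniform convergence on compacta.

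For the one-dimensional density, I would first treat the case $\sigma \in C^\infty(\R)$. The finite-difference quotients
\[ \tfrac{1}{h}\bigl(\sigma((\lambda_0 + h) t + \theta) - \sigma(\lambda_0 t + \theta)\bigr) \in \mathcal{N}(\sigma) \]
converge uniformly on compact intervals in $t$ to $t\,\sigma'(\lambda_0 t + \theta)$, which therefore lies in $\overline{\mathcal{N}(\sigma)}$. Iterating $k$ times in $\lambda$ places each $t^k \sigma^{(k)}(\lambda_0 t + \theta)$ in the closure. Because $\sigma$ is non-polynomial, for every $k \in \N$ there exists $\theta$ with $\sigma^{(k)}(\theta) \neq 0$; specialising to $\lambda_0 = 0$ recovers the monomial $t^k$ up to a nonzero scalar. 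Hence $\overline{\mathcal{N}(\sigma)}$ contains all polynomials, and Weierstrass then gives density in $C(\R)$.

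For general continuous $\sigma$, mollify: for $\phi \in C^\infty_c(\R)$ set $\sigma_\phi = \sigma * \phi \in C^\infty$. Approximating the defining integral by Riemann sums expresses $\sigma_\phi(\lambda \cdot + \theta)$ as a uniform-on-compacta limit of finite linear combinations of translates-and-dilates of $\sigma$, giving the inclusion $\overline{\mathcal{N}(\sigma_\phi)} \subseteq \overline{\mathcal{N}(\sigma)}$. If every mollification $\sigma_\phi$ were polynomial then, taking $\phi$'s forming an approximate identity, $\sigma$ itself would be a pointwise limit of polynomials of uniformly bounded degree and hence polynomial, contradicting our hypothesis. Thus some $\sigma_\phi$ is smooth and non-polynomial; applying the previous step to $\sigma_\phi$ yields $\overline{\mathcal{N}(\sigma_\phi)} = C(\R)$, and the inclusion gives $\overline{\mathcal{N}(\sigma)} = C(\R)$.

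The main obstacle is the mollification step. Concretely, one must verify both that some $\sigma_\phi$ remains non-polynomial — a distributional argument relying on the fact that polynomials of a fixed maximum degree form a closed subspace under locally uniform limits — and that the Riemann-sum identification $\sigma_\phi(\lambda t + \theta) \in \overline{\mathcal{N}(\sigma)}$ converges uniformly on compacta in $t$ for each fixed $\lambda, \theta$. Assembling these analytic ingredients is what promotes the smooth-$\sigma$ argument, where the finite-difference trick is transparent, to the fully continuous, merely non-polynomial case stated in the theorem.
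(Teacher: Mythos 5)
The paper does not actually prove this statement: it is imported verbatim as Theorem~3.1 of Pinkus's survey and used as a black box in the proof of the paper's universal approximation result for neural $k$-forms. Your proposal is therefore not competing with an in-paper argument but reconstructing the proof behind the citation, and it follows the standard Leshno--Lin--Pinkus--Schocken route faithfully: reduction from $\R^\ell$-valued to scalar targets by stacking hidden layers, reduction from $\R^n$ to $\R$ via density of finite sums of ridge functions, the divided-difference trick in the dilation parameter producing $t^k\sigma^{(k)}(\theta)$ and hence all monomials when $\sigma$ is smooth and non-polynomial, and mollification to pass to merely continuous activations. All of these steps are sound. The one place your sketch is too optimistic is exactly the step you flag as the main obstacle: to conclude that $\sigma$ is a polynomial from the assumption that every mollification $\sigma*\phi$ is, you need the degrees $\deg(\sigma*\phi)$ to be bounded \emph{uniformly over} $\phi$, and this does not follow from the closedness of fixed-degree polynomial spaces together with an approximate identity alone. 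The standard way to obtain the uniform bound is a Baire category argument: fixing a compact support set, the subspaces $V_m=\{\phi : \deg(\sigma*\phi)\le m\}$ are closed in the Fr\'echet space of test functions and their union is everything, so some $V_m$ is the whole space; only then does letting $\phi$ run through an approximate identity force $\sigma$ to be a polynomial of degree at most $m$. With that ingredient supplied, your outline is a correct and complete proof of the cited theorem.
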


\begin{reptheorem}{prop:univ-approx-1}
    Let $\alpha \in C(\R, \R)$ be a non-polynomial activation function. For every $n \in \N$ and compactly supported $k$-form $\eta \in \Omega^k_c(\R^n)$ and $\epsilon > 0$ there exists a neural $k$-form $\omega^\psi$ with one hidden layer such that 
    $$ \lVert \omega^\psi - \eta \rVert_\Omega < \epsilon.$$
\end{reptheorem}

\begin{proof}
    Denote the scaling functions of $\eta$ by
    $$ \eta = \sum \eta_I dx_I.$$ 
    Since $\eta$ is compactly supported, each $\eta_I$ is compactly supported over some domain $D$. By \ref{universal_function_approx}, for any $\ep > 0$ there exists a one layer MLP $\psi : \R^n \to \R^{\binom{n}{k}}$ such that
    $$\lVert \psi - \bigoplus_I \eta_I \rVert_\infty < \ep / Vol(D)^{1/2}.$$ Using the orthogonality form $\Cref{orth}$, we have that
    $$\lVert \omega^\psi - \eta \rVert^2_p = \langle  \omega^\psi - \eta,  \omega^\psi - \eta \rangle_p =  \sum_I (\psi_I(p) - \eta_I(p) )^2 = \lVert \psi(p) - \oplus_I \eta_I(p) \rVert_{\R^{\binom{n}{k}}}^2 \leq \ep^2/Vol(D).$$ Integrating the above, we attain
    \begin{align*} \lVert \omega^\psi - \eta \rVert^2_\Omega & \leq \int_D \langle \omega^\psi - \eta, \omega^\psi - \eta \rangle_p dVol(\R^n)\\
    & < \ep^2 / Vol(D) \int_D dVol(\R^n) \\
    &  = \ep^2
    \end{align*}
    proving the result.
\end{proof}

\paragraph{Equivariance.} Integration is linear in both $k$-forms (\cite{Lee03}, 16.21) and embedded simplicial chains \ref{int-embedded-simplicial} in the sense that it defines a bilinear pairing
\begin{equation}
    \int : C_k(\mathcal{S};\R) \otimes \Omega^k(\R^n) \to \R
\end{equation} for some embedded simplicial complex $\phi : \cS \to \R^n$. This property directly implies the kind of multi-linearity described in \Cref{multi-linearity}. We present a proof here for completeness.

\begin{repproposition}{multi-linearity}[Multi-linearity] Let $\phi: \mathcal{S} \to \R^n$ be an embedded simplicial complex. For any matrices $L \in M^{m' \times m}(\R)$ and $R \in M^{\ell \times \ell'}(\R)$ we have
    \begin{equation}
        X_\phi(L \beta, \omega R) = L X_\phi(\beta, \omega) R
    \end{equation}
\end{repproposition}

\begin{proof}
    On the left we have
    \begin{align}
        X_\phi(L \beta, \omega)_{i,j} & = X\Big( \sum_k L_{1,k} \beta_k, \ldots, \sum_k L_{m',k} \beta_k, \omega\Big)_{i,j}\\
        & = \int_{\sum_k L_{i,k} \beta_k} \omega_j\\
        & = \sum_{k} L_{i,k} \int_{\beta_k} \omega_j\\
        & = \Big[ L X_\phi(\beta,\omega) \Big]_{i,j}.
    \end{align}
    Similarly, on the right we have
    \begin{align}
        X_\phi(\beta,\omega R)_{i,j} & = X_\phi(\beta, \sum_k R_{k,1} \omega_k, \ldots, \sum_k R_{k,\ell'} \omega_k\Big)_{i,j}\\
        & = \int_{\beta_i} \sum R_{k,j} \omega_k\\
        & = \sum_k R_{k,j} \int_{\beta_i} \omega_k\\
        & = \Big[ X_{\phi}(\beta,\omega) R \Big]_{i,j} 
    \end{align}
\end{proof}

\section{Additional Experiments} \label{sec:add-exp}

\subsection{Convolutional $1$-form Networks} 

 \paragraph{Convolution and Equivariance.} Beyond orientation and permuation equivariance, the node embeddings themselves may be equivariant with respect to a group action on the feature space. The class of \textit{geometric graph neural networks} \cite{han2022} were designed specifically to deal with such equivariances. The lesson from image processing is that translation equivariance can be mitigated via \textit{convolution}. In this paradigm processing embedded (oriented) graphs with learnable, `template' $1$-forms  is directly analogous to processing images with learnable convolutional filters. To make this connection precise, we will perform a small example to classify oriented graphs.

\paragraph{Synthetic Data.} In Figure \ref{fig:conv-net}, there are two classes of cycle and star graphs. The cycle graphs have clockwise orientation and the star graph are oriented so edges point inwards. Each data-point is a simplicial complex representing the disjoint union of either three cycle or star graphs which are randomly recentered around the unit square. The chains on each complex are initialised using the standard oriented $1$-simplices as a basis. 

\paragraph{Architecture.} We initialise a neural network $ \psi : \R^2 \to \R^{2\times 2}$ with a single 64-dimensional hidden layer and \texttt{ReLU} activation which corresponds to two feature $1$-forms. To perform a convolutional pass, we first discretize the unit square to produce a set of translations. At each translation, we restrict the embedded graph to the subgraph within a neighbourhood and weight it by the local node density approximated by a standard kernel density estimator. This is equivalent to translating the $1$-form by the corresponding (negative) vector in $\R^2$ and integrating over a small neighbourhood (whose area is a hyperparameter). Integration produces an integration matrix at each point with two columns, and taking the column sum represents the oriented integral of the two convolutional filters against the local neighbourhood of the oriented graph. \texttt{CrossEntropyLoss} is calculated by summing the integrals over all translations and applying \texttt{softmax}. 

\paragraph{Interpreting the Results.} The integrals are shown in Figure \ref{fig:conv-net} as a colouring of each point in the grid of translations. We plot the learned 'template' 1-forms next to an example of their respective classes. By the construction of the objective function, the algorithm is trying to maximise the sum of the integrals across all translations within the class. As in the synthetic paths example, the learned filters successfully capture the interpretable, locally relevant structure; the edges in each class resemble flow-lines of the learned vector field in different local neighbourhoods around the relevant translations.

\begin{figure}
    \centering
    \includegraphics[width = 0.8\linewidth]{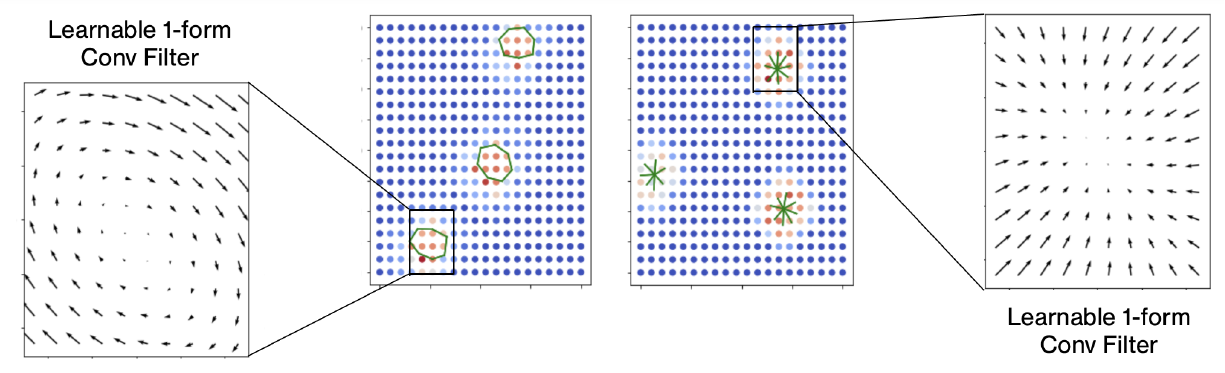}
    \caption{ \centering  A convolutional $1$-form network with the learned convolutional filters forms.}
    \label{fig:conv-net}
\end{figure}

\subsection{Visualising Simplicial Laplacian \texorpdfstring{$1$}{1}-Eigenvectors}

In this example, we show how we can use neural $1$-form to visualise the $1$-eigenvectors of the simplicial Laplacian\footnote{See Appendix for definitions.} of a Rips complex  in $\R^2$ (\Cref{fig:1-eigenvectors-rips1}). The idea is that we start with a collection of eigenvectors of simplicial $1$-cochains, and learn $1$-forms which integrate to them. We see the results in \Cref{fig:learned-1-eigs1}, where the three eigenvectors belong to the harmonic, gradient-like and curl components of the simplicial Hodge decomposition respectively. 

First we generate a point cloud in $\R^2$ as a noisey approximation of a circle, then take the Rips complex for a fixed parameter $\varepsilon$. We then calculate the first $9$ eigenvectors of the $1$-simplicial Laplacian, sorted by increasing eigenvalue, and store them as columns of a matrix $Y$ with respect to the standard basis $\beta$. 
We initialise $9$ neural $1$-forms $$\omega = (\omega_1, \omega_2, \ldots, \omega_9) \in \bigoplus_\ell \Omega^1_{PL}(\R^2)$$ on $\R^2$ using a ReLU activation function. Our loss function is then
$$\mathcal{L}(\omega) = \lVert X_\phi(\beta,\omega) - Y \rVert$$ where the norm is the standard matrix norm, and $X_\phi(\beta, \omega)$ is the evaluation matrix yielded from integrating $\omega$ over $\beta$. When $\mathcal{L}$ is small, the $1$-forms over $\R^2$ will correspond to the selected simplicial eigenvectors when integrated over the complex. 

Figure \ref{fig:1-eigenvectors-rips1} shows the results---one notes that the different $1$-forms appear to coalesce around geometric features in the underlying point cloud. Additionally, it is important to observe that the Laplacian $1$-eigenvectors can be categorized into three distinct classes depending on their membership within the components of the Hodge decomposition:
\begin{enumerate}
    \item The first class consists of eigenvectors that belong to the image of the adjoint of the differential operator $d^1$,
    \item The second class comprises eigenvectors that reside within the kernel of the Laplacian operator $\Delta^1$, 
    \item Lastly, the third class includes eigenvectors that are part of the image of the differential operator $d^0$.
\end{enumerate}  These classes correspond, in the context of simplicial structures, to analogues of rotational, harmonic, and gradient-like vector fields found in Riemannian manifolds. By referring to \cref{fig:learned-1-eigs1}, one can visually identify the class to which each eigenvector belongs based on whether or not there exists rotational structure. We have color-coded these as green, red, and blue, respectively.

\begin{figure}
\centering
\begin{subfigure}{0.25\textwidth}
\centering
    {\includegraphics[width=\textwidth]{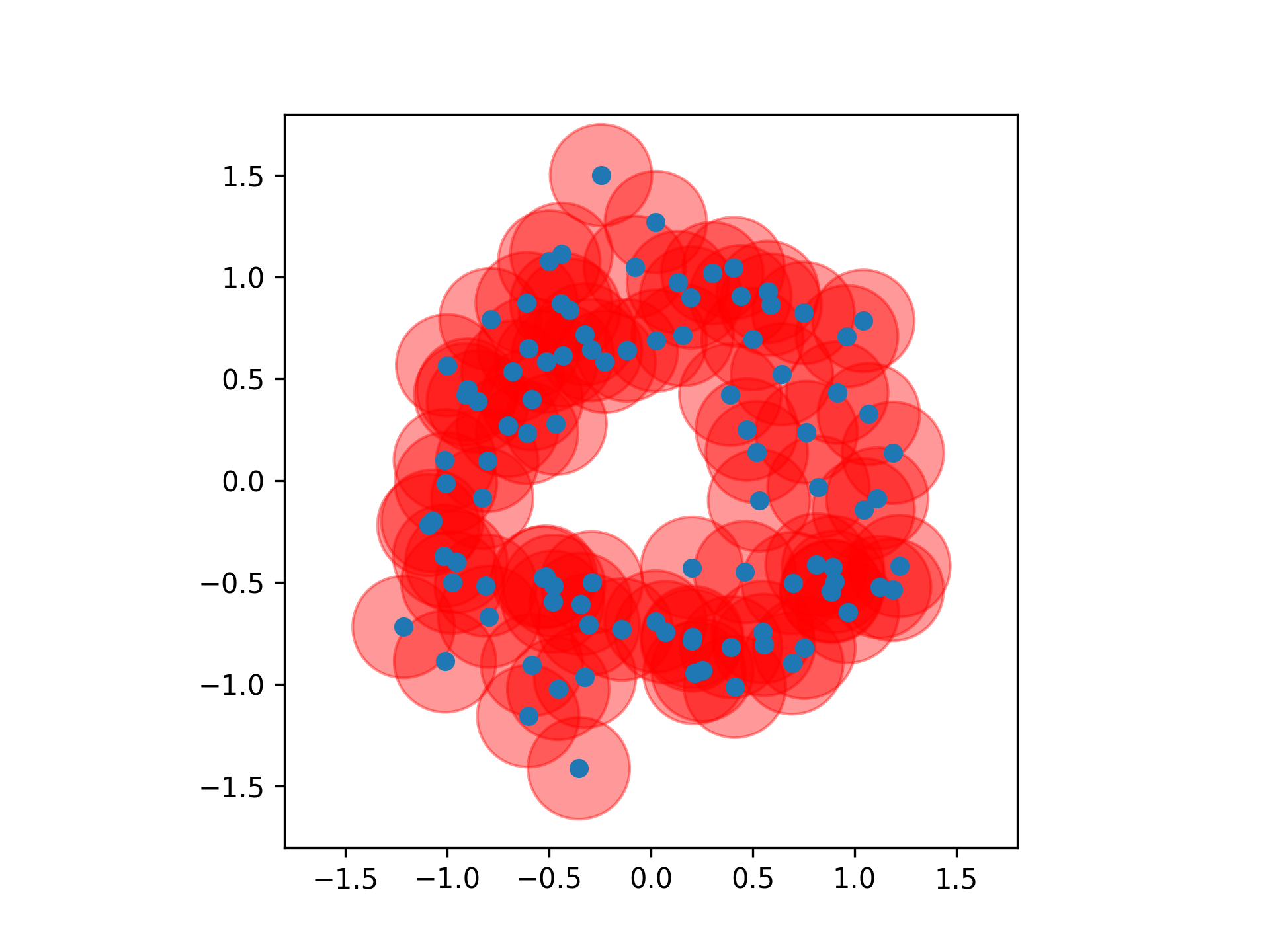}}
    \caption{\v{C}ech Cover}
    \label{fig:rips1}
\end{subfigure}
\begin{subfigure}{0.64\textwidth}
\centering
    \includegraphics[width=\textwidth]{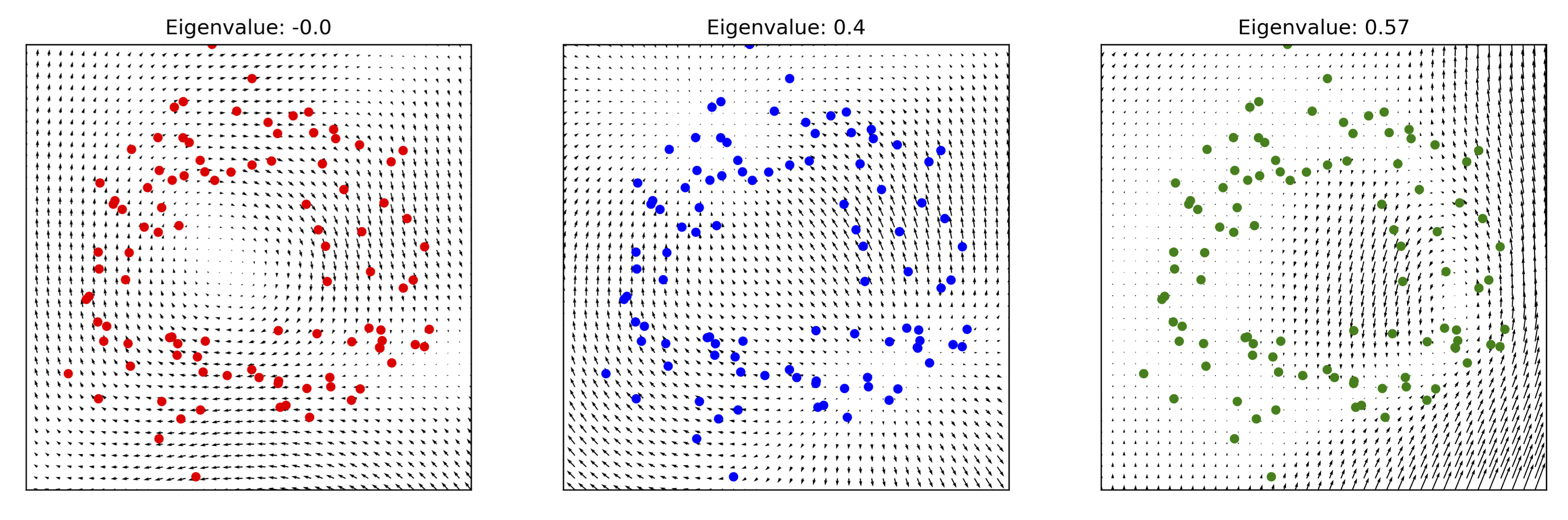}
    \caption{Learned $1$-forms}
    \label{fig:learned-1-eigs1}
\end{subfigure}
\caption{Approximating the $1$-eigenvectors of the simplicial Laplacian with learnable $1$-forms using the \v{C}ech Cover of a point cloud.
}
\label{fig:1-eigenvectors-rips1}
\end{figure}

\section{Experimental Setup and Implementation Details} \label{app:exp-details}

\subsection{Synthetic Path Classification}

\paragraph{Synthetic Path Classification.} We initialise neural $1$-forms $\omega_1, \omega_2, \omega_3 \in \Omega^1_{PL}(\R^2)$ with ReLU activation for each of the three classes. Each path $p$ is represented as an oriented simplicial complex, where the orientation is induced by the direction of the path. Letting $\beta$ be the standard basis, the three $1$-forms generate an evaluation matrix $X_\phi(\beta,\omega)$ whose entries are the integration of the $1$-form $\int_{e_i} \omega_j$ against each $1$-simplex in the paths. 

The readout function sums the entries in each column, which by the linearity of integration, represents the path integral along $p$ against each $\omega_i$. In short, each path $p$ is represented as a vector in $\R^3$ using the map
\begin{equation} \label{path-vectorization} p \mapsto (\int_p \omega_1, \int_p \omega_2, \int_p \omega_3) \in \R^3.\end{equation} We then use cross-entropy loss between this vector and the class vector. In this design, each $1$-form corresponds to a class, and a path should ideally have a high path integral against this form if and only if it belongs to the class.

\textit{Comparison with Neural Networks.} In our synthetic experiment, we test whether edge data is necessary by examining whether we can attain comparable results using only the embedded vertices of the path. 

In our context, the right framework to analyse the vertices is a set of $0$-forms on $\R^2$ - in other words, scalar functions over $\R^2$. We use the same setup as before, where we initialise three functions $f_1, f_2, f_3 \in \Omega_{PL}^0(\R^2)$ corresponding to the three classes. Integration of the vertices in the path against each of $f_i$ simply corresponds to evaluating $f_i$ at the vertex. Summing the columns of the evaluation matrix then sums up the value of $f_i$ at all points in a path. In this sense, each $f_i$ functions much like an approximated 'density' for the vertices of each class - albeit with negative values. 

In Figure \ref{fig:scalar-fields}, we show example paths from each class against the learned scalar function representing that class. In this example, the vertices of paths in each class have a similar density. One sees that the algorithm learns something reasonable, picking out minor fluctuations in density, but struggles overall to separate out the classes. With the same number of parameters and training time, the algorithm is only able to achieve a training accuracy of 37\%.

\begin{figure}
\centering
\includegraphics[width=0.9\textwidth]{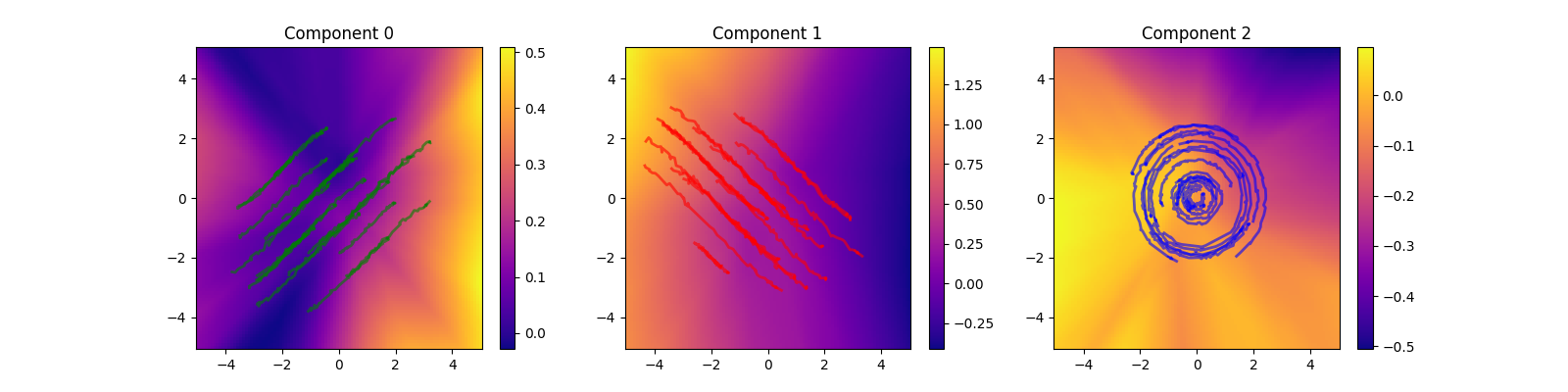}
\caption{Learned $0$-forms for Synthetic Path Classifcation.}
\label{fig:scalar-fields}
\end{figure}

\subsection{Synthetic Surface Classification}\label{sec:surface_ap}

We consider two classes of synthetic surfaces obtained by embedding a triangulated square in $\R^3$.
The embedding of the vertices of the triangulated square are given by functions of the type $\phi_1(x,y) = sin(x)+\epsilon(x,y)$ for the first class and $\phi_2(x,y) = sin(y)+\epsilon(x,y)$ for the second class, where $\epsilon$ is random noise. 
We initialise two neural $2$-forms $\omega_1, \omega_2 \in \Omega^2_{PL}(\R^3)$ with ReLU activation for each of the classes. 
Letting $\beta$ be the standard basis, the two $2$-forms generate an evaluation matrix $X_\phi(\beta,\omega)$ whose entries are the integration of the $2$-form $\int_{e_i} \omega_j$ against each $2$-simplex in the surfaces.
The readout function sums the entries in each column, which represents the integral of each $\omega_i$ over the entire surface, thus yielding a representation of the surfaces in $\R^2$ in the following way
\begin{equation} s  \mapsto (\int_s \omega_1, \int_s \omega_2,) \in \R^2.\end{equation} 
Finally, we use the cross entropy loss function to classify the surfaces into the two classes. In this design, each $2$-form corresponds to a class, and a surface should ideally have a high integral against this form if and only if it belongs to the class.

Figure \ref{fig:surfaces} shows the integral of $\omega_1$ on surfaces taken from each of the two classes. This integral is positive for elements of the first class and negative for elements of the second class. Similarly the values for the integral of $\omega_2$ is negative on surfaces of the first class and positive on elements of the second class.

\begin{figure}
\centering
\begin{subfigure}{0.45\textwidth}
\centering
    {\includegraphics[width=\textwidth]{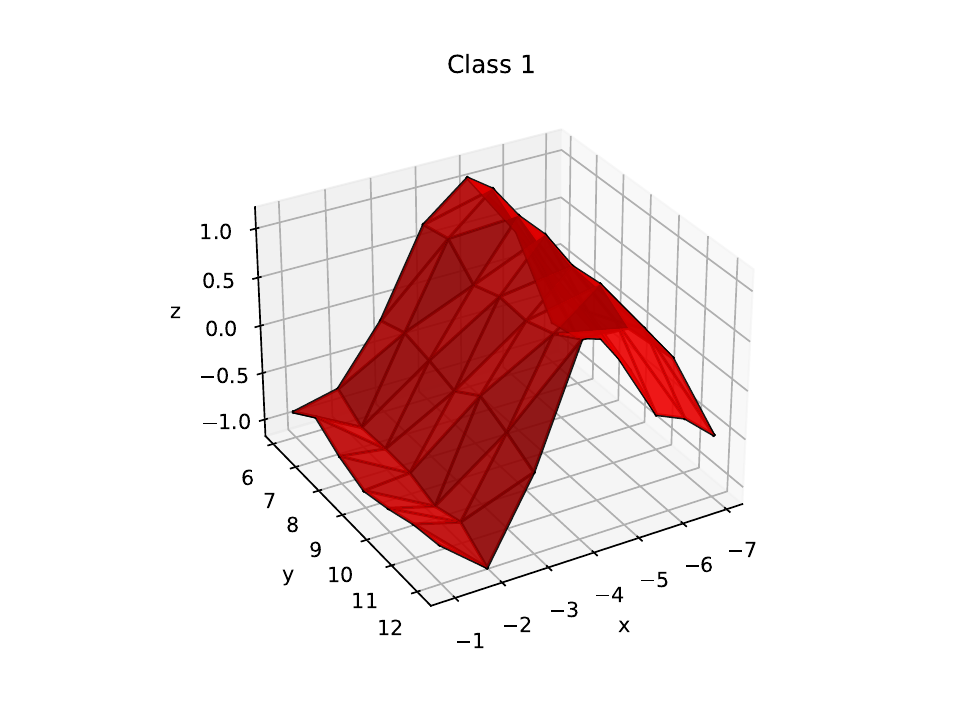}}
    \caption{Integral of $\omega_1$ on a representative of the first class.}
    \label{fig:surface1}
\end{subfigure}
\begin{subfigure}{0.45\textwidth}
\centering
    \includegraphics[width=\textwidth]{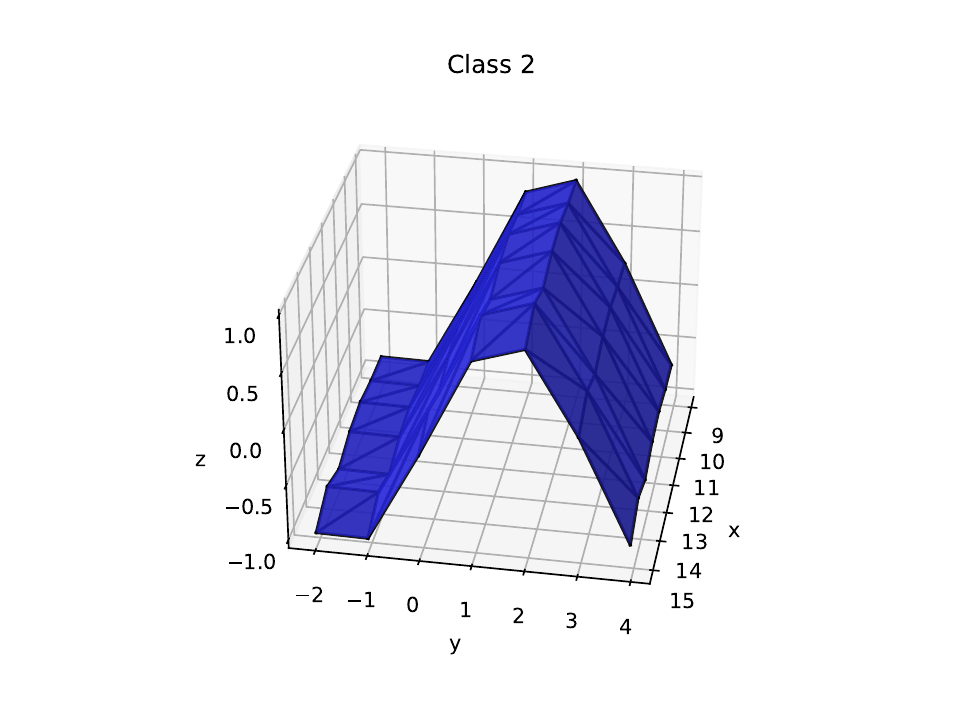}
    \caption{Integral of $\omega_1$ on a representative of the first class.}
    \label{fig:surface2}
\end{subfigure}
\caption{Representative surfaces of each classes, colored by the integral of the learned $2$-form $\omega_1$ over the surface. The integral of $\omega_1$ over elements of the first class yields positive values whereas the integral of the same form over elements of the second class yields negative values. For the form $\omega_2$ the opposite is true. 
}
\label{fig:surfaces}
\end{figure}

\subsection{Graph Benchmark Datasets}\label{app:Graph Benchmark Datasets}

For the small graph benchmark datasets~(AIDS, BZR, COX2, DHFR, Letter-low,
Letter-med, Letter-high), we use a learning rate of $1e-3$, a batch size
of $16$, a hidden dimension of $16$, and $h = 5$ discretisation steps
for all $k$-forms. For our comparison partners, i.e.\ for the graph
neural networks, we use~$h$ hidden layers to permit message passing,
followed by additive pooling. As a result, all models have roughly
equivalent parameter budgets, with our model having access to the
\emph{smallest} number of parameters.

\paragraph{Architectures.}
Our model architectures for our comparison partners follow the
implementations described in the respective
papers~\citep{kipf2017semisupervised, velickovic2018graph, xu2018how}. We make use of the
\texttt{pytorch-geometric} package and use the classes \texttt{GAT}, \texttt{GCN},
and \texttt{GIN}, respectively.
Our own model consists of a \emph{learnable vector field} and
a \emph{classifier} network. Letting $H$ refer to the hidden dimension
and $D_{\text{in}}$, $D_{\text{out}}$ to the input/output dimension,
respectively, we realise the vector field as an MLP of the form
\texttt{Linear[$D_{\text{in}}$,$H$] - ReLU - Linear[$H$,$H / 2$] - ReLU
- Linear[$H/ 2$,$D_{\text{out}}$]}. The \emph{classifier network}
consists of another MLP, making use the number of steps~$h$ for the
discretisation of our cochains. It has an architecture of
\texttt{Linear[$h$,$H$] - ReLU - Linear[$H$,$H / 2$] - ReLU
- Linear[$H/ 2$,$c$]}, where $c$ refers to the number of classes.

\paragraph{Training.}
We train all models in the same framework, allocating at most $100$
epochs for the training. We also add \emph{early stopping} based on the
validation loss with a patience of $40$ epochs. Moreover, we use
a learning rate scheduler to reduce the learning rate upon a plateau.

\end{document}